\DeclareMathOperator{\Ex}{\mathbb{E}}
\DeclareMathOperator{\Var}{\text{Var}}
\DeclareMathOperator*{\argmax}{arg\,max}
\DeclareMathOperator*{\argmin}{arg\,min}
\newcommand{\calA}{\mathcal{A}}
\newcommand{\calD}{\mathcal{D}}
\newcommand{\calX}{\mathcal{X}}
\newcommand{\calR}{\mathcal{R}}
\newcommand{\IPS}{\text{IPS}}
\newcommand{\BAL}{\text{BAL}}
\newcommand{\x}{{x}}                 
\newcommand{\xRV}{{X}}               
\newcommand{\action}{{a}}                 
\newcommand{\actionRV}{{A}}               
\newcommand{\reward}{{r}}                 
\newcommand{\Util}{{U}}                 
\newcommand{\UtilIPS}{{\hat{U}^{\text{IPS}}}}                 
\newcommand{\old}{{log}}
\newcommand{\eval}{{aug}}
\newcommand{\target}{{tar}}
\newcommand{\policy}{\pi}
\newcommand{\policyspace}{\Pi}
\newcommand{\oldpolicy}{\pi_{\text{\old}}}
\newcommand{\targetpolicy}{\pi_{\text{\target}}}
\newcommand{\targetpolicyspace}{\policyspace_{\text{\target}}}
\newcommand{\evalpolicy}{\pi_{\text{\eval}}}
\newcommand{\blendedpolicy}{\pi_{\text{balanced}}}
\newcommand{\minvarpolicy}{\pi_{\text{minvar}}^{\text{IPS}}}
\newcommand{\maxpolicy}{\pi_{\text{max}}}
\newcommand{\paren}[1]{\left(#1\right)}
\newcommand{\sqaren}[1]{\left[#1\right]}
\newcommand{\dataset}{\calD}
\newcommand{\evaln}{{n_\text{\eval}}}
\newcommand{\oldn}{{n_\text{\old}}}
\newcommand{\evalD}{{\dataset_\text{\eval}}}
\newcommand{\oldD}{{\dataset_\text{\old}}}
\newcommand{\rewardmean}{{\bar \reward(\x,\action)}}
\newcommand{\rewardmeansq}{{\bar \reward^2(\x,\action)}}
\newcommand{\rewardvar}{{\sigma^2(\x,\action)}}
\newcommand{\expectedsquarer}{\Ex_r\sqaren{r^2(x, a)}}
\newcommand{\expectedsquarertight}{\Ex_r\!\!\sqaren{r^2(x, a)}}
\newcommand{\rewardmeansqprewardvar}{\expectedsquarer}
\newcommand{\parenrewardmeansqprewardvar}{\expectedsquarer}
\newtheorem{optimizationproblem}{Optimization Problem}
\theoremstyle{plain}
\newtheorem{theorem}{Theorem}[section]
\newtheorem{proposition}[theorem]{Proposition}
\newtheorem{lemma}[theorem]{Lemma}
\theoremstyle{definition}
\theoremstyle{remark}
\title{Variance-Optimal Augmentation Logging for Counterfactual Evaluation in Contextual Bandits}
\author{Aaron David Tucker\footnote{aarondtucker@cs.cornell.edu}\hspace{4pt} and Thorsten Joachims\\ 
Department of Computer Science, Cornell University, Ithaca NY\\
}
\begin{document}

\maketitle

\begin{abstract}
Methods for offline A/B testing and counterfactual learning are seeing rapid adoption in search and recommender systems, since they allow efficient reuse of existing log data. However, there are fundamental limits to using existing log data alone, since the counterfactual estimators that are commonly used in these methods can have large bias and large variance when the logging policy is very different from the target policy being evaluated. To overcome this limitation, we explore the question of how to design data-gathering policies that most effectively augment an existing dataset of bandit feedback with additional observations for both learning and evaluation. To this effect, this paper introduces Minimum Variance Augmentation Logging (MVAL), a method for constructing logging policies that minimize the variance of the downstream evaluation or learning problem. We explore multiple approaches to computing MVAL policies efficiently, and find that they can be substantially more effective in decreasing the variance of an estimator than naïve approaches.
\end{abstract}

\section{Introduction}
Logged user feedback from online systems is one of the primary sources of training data for search and recommender systems. However, learning from log data is challenging, since the rewards are only partially observed.
In particular, logged data contains the observed reward (e.g.\ click/no click) only for the specific action (e.g.\ movie recommendation) that the historic system took, but it does not include the reward observations for the other possible actions the system could have taken (e.g. all other movies). 
This means that offline policy evaluation requires us to deal with counterfactual outcomes when the historic system and the new policy do not chose the same action.
Fortunately, over the recent years an increasingly rich set of counterfactual estimators \cite{bottoucounterfactual,doubly,morerobust,selfnorm,magic,switch,cab,defsupport} and learning methods \cite{beygelzimer2009offset,strehl2010learning,poem,banditnet,cab,london2019bayesian} have been developed that can use logged user feedback with strong theoretical guarantees despite the partial nature of the data. 

These developments have led to increasing adoption of counterfactual learning and evaluation in real-world applications, where they are used to conduct unbiased ``offline A/B tests" and to train policies that directly and provably optimize online metrics. However, we point out that all counterfactual methods are fundamentally limited by the information contained in the logs. In particular, if the policy that logged the data is substantially different from the target policy we want to evaluate, the variance of the estimate will be high or the estimator may even be biased \citep[e.g.,][]{defsupport}.


To overcome this fundamental limitation of counterfactual methods, this paper explores how to best collect a limited amount of additional log data to maximize the effectiveness of the counterfactual estimator. We call this the problem of {\em augmentation logging}, and we study how to design augmentation logging policies that optimally augment an existing log dataset. The resulting methods can be used to optimize data efficiency for A/B testing, and even address the question of how to log data in contextual-bandit systems that are re-trained periodically (e.g., weekly).

The main contributions of this paper are four-fold. First, we introduce and formalize the problem of augmentation logging as minimizing the bias and variance of the counterfactual estimator given existing logged data. Second, we show how to compute variance-optimal augmentation policies and provide a theoretical characterization of this approach. Third, we develop a method to approximate the optimal augmentation policy, improving its online efficiency. Finally, we empirically evaluate the methods for both counterfactual policy evaluation and counterfactual policy learning.


\section{Related Work}
While our formulation of optimal augmentation logging for counterfactual policy evaluation and learning is new, it is connected to several bodies of existing literature.

\vspace*{-1mm}\paragraph{Off-policy evaluation.}
Counterfactual evaluation (a.k.a. off-policy evaluation, offline evaluation, or offline A/B testing) has been widely studied as a method of estimating the value of a policy based on data collected under a different policy. Estimators like Inverse Propensity Score (IPS) weighting and its variants \cite{ips,strehl2010learning,selfnorm} are typically used to correct the distribution shift between logging and target policy.
While unbiased under full support \cite{owen,strehl2010learning}, such estimators can have large variance.
Our method aims to reduce the variance of the estimator, which is also the motivation behind work such as \citet{bottoucounterfactual,doubly,morerobust,cab,defsupport,switch}, or \citet{magic} for evaluation; and behind \citet{bottoucounterfactual,strehl2010learning,poem,banditnet,cab}, or \citet{london2019bayesian} for learning. However, instead of treating the data as fixed and trying to reduce bias and variance with this constraint, we investigate which additional data would most reduce bias and variance. 

\vspace*{-1mm}\paragraph{Online Contextual Bandits.} Online contextual bandits (see e.g. \citet{cesa-bianchi06}) have been studied as a model of sequential decision making under a variety of settings and modeling assumptions \cite{contextualbandits,lints,monster,bakeoff,news}. Our work shares the bandit feedback and contextual features of the online contextual bandit setting, but it relaxes the requirements of real-time adaptivity that defines online algorithms. In particular, we find that many real-world settings do not allow for a centralized and synchronous control that permits constant adaptation of the policy -- and the risks to robustness and adversarial manipulation that come with that. Instead, our setting of infrequent batch updates to the policy allows the easy use of cross-validation for hyperparameter tuning, as well as the opportunity to carefully validate any policy before it gets fielded, which greatly improves practical feasibility.


\vspace*{-1mm}\paragraph{Monte-Carlo Estimation.} Designing optimal sampling distributions is a problem widely considered in Monte-Carlo estimation \cite{owen}. We draw upon foundational results about which sampling strategies are optimal for importance sampling estimators, which are analogous to IPS estimators. Moreover, we relate augmentation logging to multiple importance sampling \cite{balancedestimator}, and we show how to extend these methods to get uniform bounds on the variance of a class of target policies.

\section{Single Policy Evaluation}

We begin by formalizing the augmentation-logging problem for evaluating a single target policy $\targetpolicy$, and then extend this approach to multi-policy evaluation and learning in Section~\ref{sec:multipol}. In all three settings, we consider contextual bandit policies, which are widely used to model search and recommendation problems \cite{news, search}. At each time step $i$, a context $\x_i$ (e.g., query, user request) is sampled i.i.d.\ from an underlying distribution $\x_i \sim \Pr(\xRV)$, and a policy $\policy$ stochastically chooses an action $\action_i$ (e.g., a movie to recommend) such that $\action_i \sim \policy(\actionRV|\x)$. The system then observes the reward $\reward_i$ (e.g., purchase) for action $\action_i$ from the environment. 

The central question in single-policy evaluation lies in estimating the expected reward (a.k.a.\ utility)
\begin{equation}
    \Util(\targetpolicy) = \sum_\x \sum_\action \Ex_r[r(\x,\action)] \targetpolicy(\action|\x) \Pr(\x)
\end{equation}
of some target policy $\targetpolicy$. The conventional approach is to field this target policy in an A/B test, which allows us to estimate $\Util(\targetpolicy)$ simply from the average of the observed rewards. However, such online A/B tests typically take a long time to complete, and they do not scale when we need to evaluate many target policies. Therefore, offline evaluation has seen substantial interest, since it computes an estimate of $\Util(\targetpolicy)$ using only historic data
\begin{equation}
    \oldD = \{x_i, a_i, r_i\}_{i=1}^\oldn 
\end{equation}
already logged from some other policy $\oldpolicy$.\footnote{For simplicity of notation, we assume all observations were collected from the same $\oldpolicy$. However, all results in this paper can be extended to the case where the data comes from multiple logging policies.} The key challenge lies in the fact that the logging policy $\oldpolicy$ is typically picks actions that are different from those selected by the target policy $\targetpolicy$. This challenge can be addressed by counterfactual estimators such as inverse propensity score (IPS) weighting
\begin{equation}
    \UtilIPS(\targetpolicy) = \sum_{i=1}^{\oldn} \frac{\targetpolicy(\action_i|\x_i)}{\oldpolicy(\action_i|\x_i)} \reward_i.
\end{equation}
The IPS estimator can be shown to be unbiased whenever the logging policy has full support under the target policy, namely when $\forall \x \forall \action: \targetpolicy(\action|\x) P(\x) > 0 \rightarrow \oldpolicy(\action|\x)>0$. Unfortunately, this condition is frequently violated in practical applications. Further, even if the condition is met, the IPS estimator can have excessive variance. Much work has gone into mitigating both the bias problem \cite{doubly,defsupport} and the variance problem \cite{bottoucounterfactual,doubly,cab,switch,magic}, but any estimator that only has the information in $\oldD$ is fundamentally limited.

To overcome these fundamental limits, we allow that we can augment $\oldD$ with $\evaln$ additional observations 
\begin{equation}
    \evalD = \{x_i, a_i, r_i\}_{i=1}^\evaln 
\end{equation}
from an augmentation logging policy $\evalpolicy$. We next address the key questions of which counterfactual estimator to use, and how to design $\evalpolicy$ so that the $\evaln$ additional observations most improve the quality of the utility estimate.

\subsection{Designing Variance-Optimal Augmentation Policies}

In order to reason about which augmentation policy $\evalpolicy$ minimizes bias and variance, we first need to select an estimator. As we will justify below, we focus on the balanced estimator \cite{balancedestimator},
\begin{equation}
\hat R_{\targetpolicy}^\text{BAL} = \dfrac{1}{N}\paren{\sum_{(\x_i, \action_i, \reward_i) \in \oldD \cup \evalD}\dfrac{\targetpolicy(\action_i|\x_i)}{\blendedpolicy(\action_i|\x_i)} \reward_i}, \label{equation:balanced}
\end{equation}
where $N=\oldn + \evaln$, $\alpha = \evaln / N$ and
\begin{eqnarray*}
\blendedpolicy(\action|\x) &=& (1-\alpha)\oldpolicy(\action|\x) + \alpha\evalpolicy(\action|\x).
\end{eqnarray*}
This estimator was shown to never have larger variance than the following more naïve IPS estimator 
\begin{equation*}
\hat R_{\targetpolicy}^{\IPS}
= \dfrac{1}{N}\!\!\!\!\!\!\!\!\!\sum_{\vspace{24pt}{(\x, \action, \reward)_i \in \oldD}}\!\!\!\! \dfrac{\targetpolicy(a_i|x_i)}{\oldpolicy(a_i|x_i)}r_i
+ \dfrac{1}{N}\!\!\!\!\!\!\!\!\!\sum_{\vspace{24pt}{(\x, \action, \reward)_j \in \evalD}}\!\!\!\! \dfrac{\targetpolicy(a_j|x_j)}{\evalpolicy(a_j|x_j)}r_j
\end{equation*}
that weights each action by the policy that selected it, and it can have substantially smaller variance \cite{balancedestimator}. Further, it is easy to verify that the balanced estimator is unbiased under strictly weaker conditions than the IPS estimator. In particular, the balanced estimator is already unbiased if $\forall \x \forall \action: \targetpolicy(\action|\x) P(\x)>0 \rightarrow (\oldpolicy(\action|\x)>0 \vee \evalpolicy(\action|\x)>0)$.
The balanced estimator has the following variance, with a proof provided in Appendix \ref{proof:balancedvar}.

\begin{equation}
\label{equation:balancedvar}
\text{\!\!\!Var}\!\left[\!\hat R^{\text{BAL}}_{\targetpolicy}\!\right]
\!=\! \dfrac{1}{N}\!\paren{\!\mathbb{E}_{\x}\!\!\left[\!\sum_{\action \in \calA}\!\!\!
\dfrac{\targetpolicy^2(\action|\x)\!\expectedsquarertight}{\blendedpolicy(\action|\x)}
\!\right] \!-\!
R_{\targetpolicy}^2\!\!}\!\!\!
\end{equation}

In this equation, $\expectedsquarer = \rewardmeansq+\rewardvar$, where $\rewardmean$ and $\rewardvar$ are the expected rewards and their variance conditioned on the given $x, a$. Importantly, this variance depends directly on $\blendedpolicy$, which is based on $\oldpolicy$ and $\evalpolicy$. This allows us to design a $\evalpolicy$ to compensate for high variance terms caused by $\oldpolicy$. This is not possible for the  naïve IPS estimator $\hat R_{\targetpolicy}^{\IPS}$ for the following reason.
\begin{proposition}
For the IPS estimator $\hat R_{\targetpolicy}^{\IPS}$, the variance-optimal augmentation policy is independent of $\oldpolicy$. 
\end{proposition}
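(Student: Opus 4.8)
The plan is to exploit the fact that the IPS estimator $\hat R_{\targetpolicy}^{\IPS}$, in contrast to the balanced estimator, never mixes the two datasets: the sum over $\oldD$ is reweighted purely by $\oldpolicy$ and the sum over $\evalD$ purely by $\evalpolicy$, so the two contributions live on independently drawn samples. First I would invoke this independence to write the variance as the sum of the variances of the two sub-sums, the cross term vanishing because $\oldD$ and $\evalD$ are generated by separate, independent sampling processes. Since the observations within each dataset are i.i.d., each sub-sum contributes a per-sample variance scaled by its sample count, yielding
\begin{align*}
\Var\!\left[\hat R_{\targetpolicy}^{\IPS}\right]
&= \frac{\oldn}{N^2}\,\Var_{\oldpolicy}\!\left[\frac{\targetpolicy(\action|\x)}{\oldpolicy(\action|\x)}\reward\right] \\
&\quad + \frac{\evaln}{N^2}\,\Var_{\evalpolicy}\!\left[\frac{\targetpolicy(\action|\x)}{\evalpolicy(\action|\x)}\reward\right],
\end{align*}
where the first per-sample variance is taken with $\action \sim \oldpolicy(\cdot|\x)$ and the second with $\action \sim \evalpolicy(\cdot|\x)$, and $\x$ and $\reward$ are drawn from their usual distributions in both.

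The key structural observation is then immediate. Because $\oldn$, $\evaln$, and $N = \oldn + \evaln$ are fixed by the sampling budget, the first summand depends only on $\oldpolicy$ and $\targetpolicy$ and is thus a constant with respect to the choice of $\evalpolicy$, while the second summand involves $\evalpolicy$ and $\targetpolicy$ but carries no dependence on $\oldpolicy$ whatsoever. Consequently $\argmin_{\evalpolicy}\Var[\hat R_{\targetpolicy}^{\IPS}]$ is attained at exactly the same $\evalpolicy$ as $\argmin_{\evalpolicy}\Var_{\evalpolicy}[\targetpolicy(\action|\x)\reward/\evalpolicy(\action|\x)]$, and since this latter objective contains no occurrence of $\oldpolicy$, its minimizer cannot depend on $\oldpolicy$, which is the claim.

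The argument is short and I do not anticipate a genuine obstacle; the only step requiring care is justifying the additive variance decomposition in the display, which rests squarely on the independence of the two sampling processes that generate $\oldD$ and $\evalD$ (and, implicitly, on the per-sample variances being finite so that the decomposition is well defined). It is worth contrasting this with the balanced estimator, whose variance in Eq.~(\ref{equation:balancedvar}) couples $\oldpolicy$ and $\evalpolicy$ inside the single mixture $\blendedpolicy$ within one shared sum; it is precisely this coupling that lets the variance-optimal augmentation policy for the balanced estimator adapt to $\oldpolicy$, whereas the separability established above forbids any such adaptation for $\hat R_{\targetpolicy}^{\IPS}$.
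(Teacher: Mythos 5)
Your proposal is correct and follows essentially the same route as the paper: the paper's Appendix proof likewise decomposes $\Var[\hat R_{\targetpolicy}^{\IPS}]$ into the sum $\frac{\oldn}{N^2}\Var_{\oldpolicy}[\cdot]+\frac{\evaln}{N^2}\Var_{\evalpolicy}[\cdot]$ using the independence of $\oldD$ and $\evalD$, and then observes that the only term depending on $\evalpolicy$ carries no dependence on $\oldpolicy$ (the paper phrases this via partial derivatives, you via a constant-offset argument in the $\argmin$, which is an immaterial difference). No gaps.
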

{\sc Proof Sketch.} The variance of the IPS estimator is additive in the contributions from $\oldpolicy$ or $\evalpolicy$ for each logged data point, and thus the variance-minimizing $\evalpolicy$ is independent of $\oldpolicy$. This means that the variance-optimal policy would be the same policy as when $\evalpolicy$ is the only policy observed, and cannot account for anything related to the previous policy $\oldpolicy$. Details in Appendix \ref{proof:ipsnogo}. \qedsymbol{} \vspace*{0.2cm}

Given these deficiencies of the IPS estimator, we thus focus on the balanced estimator for designing augmentation logging policies. Note that it is straightforward to extend the balanced estimator to a doubly-robust setting \cite{doubly}, which we skip for the sake of brevity. We formulate the search for the variance-optimal augmentation policy as the following optimization problem, which we refer to as Minimum Variance Augmentation Logging (MVAL).
\begin{optimizationproblem}[MVAL for Single-Policy Evaluation] \label{op:mval}
For a given context $x \in \calX$,
\begin{align*}
\evalpolicy(\actionRV|x) = \argmin_{\policy \in \calR^{|\calA|}} \quad & \sum_{\action \in \calA}\dfrac{\targetpolicy^2(\action|\x)\expectedsquarer}{(1-\alpha)\oldpolicy(\action|\x) + \alpha \policy(\action)}\\
\textrm{subject to} \quad& \sum_{a\in \calA}\pi(a) = 1, \\
&\forall a \in \calA: \pi(a) \geq 0   
\end{align*}
\end{optimizationproblem}

The number of real-valued parameters $\policy(\action)$ in this optimization problem always equals the number of actions for the given context $\x$, and the optimization problem is always convex (Appendix \ref{proof:convex}). First, we note that the augmentation policy $\evalpolicy(\actionRV|x)$ computed by OP\ref{op:mval} is variance optimal.
\begin{theorem}[Variance Optimality of MVAL]
The solution of OP\ref{op:mval} minimizes the variance of the balance estimator.
\end{theorem}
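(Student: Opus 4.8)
The plan is to exploit the structure of the variance expression in \eqref{equation:balancedvar}. The term $-R_{\targetpolicy}^2/N$ depends only on the target policy and the reward distribution, not on the augmentation policy $\evalpolicy$, and the factor $1/N$ is a fixed positive constant. Hence minimizing $\text{Var}[\hat R^{\text{BAL}}_{\targetpolicy}]$ over $\evalpolicy$ is equivalent to minimizing only the first term, $\mathbb{E}_{\x}[\sum_{\action \in \calA} \targetpolicy^2(\action|\x)\,\expectedsquarer / \blendedpolicy(\action|\x)]$, where $\evalpolicy$ enters solely through $\blendedpolicy$. First I would make this reduction explicit, so that the optimization target is the first variance term alone.

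Next I would write the expectation over contexts as $\sum_{\x} P(\x) \sum_{\action \in \calA} \targetpolicy^2(\action|\x)\,\expectedsquarer / \blendedpolicy(\action|\x)$ and observe that the objective is separable across contexts: the summand for a fixed $\x$ depends on $\evalpolicy$ only through the vector $\evalpolicy(\cdot|\x) \in \calR^{|\calA|}$. Crucially, the feasibility constraints are separable too --- each $\evalpolicy(\cdot|\x)$ must independently lie on the probability simplex over $\calA$, with no coupling across distinct contexts. The key step is then the pointwise-optimization argument: since the weights $P(\x)$ are nonnegative and both the objective and the constraint set decompose over contexts, a policy that minimizes each per-context summand subject to its own simplex constraint simultaneously minimizes the whole weighted sum. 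For contexts with $P(\x)=0$ the choice of $\evalpolicy(\cdot|\x)$ is immaterial, while for $P(\x)>0$ minimizing the summand is equivalent (after dropping the positive factor $P(\x)$) to minimizing exactly the objective of OP\ref{op:mval}.

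Finally I would invoke the convexity of OP\ref{op:mval} (Appendix \ref{proof:convex}) together with the convexity of the simplex to conclude that the policy returned by OP\ref{op:mval} is the \emph{global} per-context minimizer rather than merely a stationary point. Assembling these per-context minimizers over all $\x$ then yields the global minimizer of the first variance term, and therefore of the variance itself.

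I expect the only genuine subtlety --- as opposed to a real obstacle --- to be the careful justification of the decoupling: one must verify that nothing in the balanced estimator's variance links the choices of $\evalpolicy$ at different contexts, so that context-by-context optimization loses nothing. Once the separability of both the objective and the constraints is in hand, the conclusion follows immediately from pointwise minimization of a nonnegatively-weighted sum, with convexity supplying the global (rather than local) optimality of each subproblem's solution.
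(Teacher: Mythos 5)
Your proposal is correct and follows essentially the same route as the paper's own (much terser) proof: drop the terms independent of $\evalpolicy$, observe that the variance is a nonnegatively weighted sum over contexts of exactly the per-context objective of OP\ref{op:mval} with separable simplex constraints, and conclude by pointwise minimization. Your additional appeal to convexity for global (rather than merely stationary) per-context optimality is a sensible piece of extra rigor the paper leaves implicit by citing Appendix \ref{proof:convex} elsewhere.
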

\begin{proof} Follows immediately for a fixed $x$ from Equation~\eqref{equation:balancedvar}, since $\blendedpolicy(\action|\x) = (1-\alpha)\oldpolicy(\action|\x) + \alpha \policy(\action)$. Variance is a weighted sum of these independent minimized terms. \end{proof}

Second, with augmentation data from $\evalpolicy$ the balanced estimator is unbiased, even if the logging policy $\oldpolicy$ is support deficient and would otherwise lead to biased estimates.
\begin{theorem}[MVAL Guarantees Unbiasedness]
\label{thm:unbiased}
OP\ref{op:mval} always produces augmentation policies $\evalpolicy$ so that the balanced estimator is unbiased for any $\oldpolicy$ and for any choice of $\rewardmean$ and $\rewardvar>0$, even if $\rewardmean$ and $\rewardvar$ are inaccurate.
\end{theorem}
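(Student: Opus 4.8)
The plan is to reduce the unbiasedness claim to a pure support condition on $\evalpolicy$ and then verify that any minimizer of OP\ref{op:mval} necessarily satisfies that condition. Recall from the discussion preceding the statement that the balanced estimator is unbiased whenever $\forall \x \forall \action: \targetpolicy(\action|\x) P(\x) > 0 \rightarrow (\oldpolicy(\action|\x) > 0 \vee \evalpolicy(\action|\x) > 0)$. Fix a context $\x$ with $P(\x) > 0$. The actions that already have $\oldpolicy(\action|\x) > 0$ satisfy the condition automatically, and actions with $\targetpolicy(\action|\x) = 0$ contribute a zero term and need no support. So the only actions that can threaten unbiasedness are the support-deficient ones with $\targetpolicy(\action|\x) > 0$ but $\oldpolicy(\action|\x) = 0$, and for these I must show the optimal $\policy$ assigns strictly positive mass.

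The core observation is that each summand of the objective with $\targetpolicy(\action|\x) > 0$ has a strictly positive numerator, because $\expectedsquarer = \rewardmeansq + \rewardvar \geq \rewardvar > 0$ by the hypothesis $\rewardvar > 0$. This is precisely where the ``inaccurate moments'' robustness enters: the argument uses only the positivity of $\expectedsquarer$, never its specific value, so whatever (positive-variance) estimates are plugged into OP\ref{op:mval}, the conclusion is unchanged. For a support-deficient action the denominator reduces to $\alpha\,\policy(\action)$, so the corresponding term equals $\targetpolicy^2(\action|\x)\expectedsquarer / (\alpha\,\policy(\action))$, which diverges to $+\infty$ as $\policy(\action) \to 0^+$.

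Next I would argue that the minimizer avoids this blow-up. The feasible set is the probability simplex, which contains interior points (e.g.\ the uniform distribution) at which every denominator is at least $\alpha/|\calA| > 0$, so the objective attains a finite value somewhere feasible. Since the objective tends to $+\infty$ along any sequence driving a support-deficient coordinate $\policy(\action)$ to zero, no such boundary point can be a minimizer; combined with the convexity and continuity of the objective on the region where the relevant denominators stay positive (Appendix~\ref{proof:convex}), the argmin must satisfy $\policy(\action) > 0$ for every action with $\targetpolicy(\action|\x) > 0$ and $\oldpolicy(\action|\x) = 0$. Hence $\evalpolicy(\action|\x) > 0$ exactly where it is needed to repair the support deficiency of $\oldpolicy$, the support condition holds for every $\x$, and the balanced estimator is unbiased.

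The main obstacle is the rigor of this last step: ruling out the possibility that the infimum is approached only in the limit at a bad boundary point. I expect to handle this by exhibiting the explicit finite-value feasible point above and invoking the coercivity-type divergence of each support-deficient term, so that any candidate boundary minimizer is strictly dominated by a nearby interior point (formally, the objective is lower semicontinuous and extended-real-valued on the compact simplex, hence attains its minimum at a finite-value point). I would also flag the degenerate case $\alpha = 0$ as excluded, since augmentation logging presumes $\evaln > 0$ and therefore $\alpha = \evaln/N > 0$.
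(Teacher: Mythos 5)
Your proposal is correct and follows essentially the same route as the paper's proof: reduce unbiasedness to the support condition for $\blendedpolicy$, note that a support-deficient action with $\targetpolicy(\action|\x)>0$ contributes a term $\targetpolicy^2(\action|\x)\expectedsquarer/(\alpha\,\policy(\action))$ that diverges as $\policy(\action)\to 0^+$, and conclude via comparison with the finite-valued uniform policy that the minimizer must place positive mass there. You merely add some welcome rigor the paper leaves implicit (attainment of the minimum on the simplex, the exclusion of $\alpha=0$, and the observation that actions with $\targetpolicy(\action|\x)=0$ need no coverage), but the argument is the same.
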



\begin{proof}
As shown in \cite{balancedestimator}, the balanced estimator is unbiased when $\blendedpolicy$ has full support for $\targetpolicy$, specifically $\forall \x \forall \action: \targetpolicy(\action|\x) P(\x)>0 \rightarrow \blendedpolicy(\action|\x)>0$. Since $\blendedpolicy$ is a convex combination of $\oldpolicy$ and $\evalpolicy$, full support is already guaranteed if $\forall \x \forall \action: \targetpolicy(\action|\x) P(\x)>0 \rightarrow (\oldpolicy(\action|\x)>0 \vee \evalpolicy(\action|\x)>0)$. To make sure that this condition is always fulfilled, we need to verify that $\evalpolicy(\action|\x)>0$ when $\oldpolicy(\action|\x)=0$. To verify this condition, note that $\oldpolicy(\action|\x)=0$ implies that the term
\begin{equation*}
    \dfrac{\targetpolicy^2(\action|\x)\expectedsquarer}{\alpha \evalpolicy(\action)}
\end{equation*}
occurs in the objective of OP\ref{op:mval}. Note that the solution of OP\ref{op:mval} cannot have $\evalpolicy(\action)=0$, since this would lead to an infinite objective which is not optimalm since the uniform $\policy$ is feasible and has a better objective.
\end{proof}

\paragraph{Computing $\rewardmeansqprewardvar$.}
Finally, we need to resolve the problem that $\expectedsquarer$ is typically unknown, for which we explore two options.

The first option is to optimize the following variant of the MVAL optimization problem, where we simply drop $\expectedsquarer$ from the objective. This is equivalent to optimizing an upper bound on the variance, where $\expectedsquarer$ is replaced with $\max_{x,a}\expectedsquarer$ in Equation~\eqref{equation:balancedvar}.

The second option is to use a regression estimate to impute the estimated value for $\expectedsquarer$. Virtually any real-valued regression technique can be applied to $\oldD$ to estimate $r^2(x, a)$, and even imperfect estimates can provide useful information about $\text{Var}[\hat R^{\text{BAL}}_{\targetpolicy}]$. Note that Theorem~\ref{thm:unbiased} holds even for incorrect estimates of $\expectedsquarer$, so that even bad reward estimates can never introduce bias.

\subsection{Analysis and Discussion}

We now further analyze the properties of MVAL policies and provide intuition through some illustrative edge cases.

\subsubsection{MVAL without Historic Log Data}

When the historic data $\oldD$ is empty, the following shows that the MVAL policy $\evalpolicy^{\BAL}(a|x)$ computed by OP\ref{op:mval} coincides with the variance-optimal logging policy $\minvarpolicy(a|x)$ for the IPS estimator.

\begin{proposition}\label{thm:minvar}
If $\alpha = \evaln / (\evaln + \oldn) = 1$, then the optimal augmentation policy is
\begin{align}\label{equation:ipsminvar}
\evalpolicy^{\BAL}(a|x) 
&= \dfrac{\targetpolicy (a|x)\sqrt{
\rewardmeansqprewardvar
}}{\sum\limits_{a\in\calA} \targetpolicy (a|x)\sqrt{
\rewardmeansqprewardvar
}}\\&=  \minvarpolicy(a|x).
\end{align}
\end{proposition}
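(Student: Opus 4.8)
The plan is to substitute $\alpha = 1$ directly into OP\ref{op:mval} and observe that the term $(1-\alpha)\oldpolicy(\action|\x)$ vanishes, so the objective collapses to the clean, $\oldpolicy$-free expression $\sum_{\action \in \calA} \targetpolicy^2(\action|\x)\,\expectedsquarer / \policy(\action)$ subject only to the simplex constraints $\sum_\action \policy(\action)=1$ and $\policy(\action)\ge 0$. Writing $c_\action = \targetpolicy^2(\action|\x)\,\expectedsquarer \ge 0$, the problem decouples (per context $\x$) into minimizing $\sum_\action c_\action/\policy(\action)$ over the probability simplex. Since this program is convex (Appendix~\ref{proof:convex}), any feasible stationary point is the global minimizer, so it suffices to exhibit one.

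First I would derive the minimizer. The cleanest route is Cauchy--Schwarz: for any feasible $\policy$,
\[
\left(\sum_\action \sqrt{c_\action}\right)^2 = \left(\sum_\action \frac{\sqrt{c_\action}}{\sqrt{\policy(\action)}}\,\sqrt{\policy(\action)}\right)^2 \le \left(\sum_\action \frac{c_\action}{\policy(\action)}\right)\left(\sum_\action \policy(\action)\right) = \sum_\action \frac{c_\action}{\policy(\action)},
\]
using $\sum_\action \policy(\action) = 1$. Equality holds exactly when $\policy(\action) \propto \sqrt{c_\action}$, giving the normalized minimizer $\policy(\action) = \sqrt{c_\action}/\sum_{\action'}\sqrt{c_{\action'}}$. (Equivalently, a one-line Lagrange computation sets $-c_\action/\policy(\action)^2 + \lambda = 0$, yielding the same proportionality.) Substituting $\sqrt{c_\action} = \targetpolicy(\action|\x)\sqrt{\expectedsquarer}$, valid because $\targetpolicy(\action|\x)\ge 0$, reproduces exactly the first displayed expression in Equation~\eqref{equation:ipsminvar}.

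Second I would establish the identity with $\minvarpolicy$. Here I would specialize Equation~\eqref{equation:balancedvar} to $\alpha = 1$, where $\blendedpolicy = \evalpolicy$ and the balanced estimator reduces to the pure-$\evalpolicy$ IPS estimator; minimizing that variance pointwise in $\x$ is precisely the program just solved. Since $\minvarpolicy$ is defined as the variance-minimizing IPS logging policy, the two coincide essentially by construction. I would also note that this recovers the classical optimal importance-sampling density of \citet{owen}, which is proportional to the target times the square root of the second moment of the integrand.

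The step I would treat most carefully — the one genuine subtlety — is the boundary behavior when some $c_\action = 0$, i.e.\ $\targetpolicy(\action|\x) = 0$. There the formula assigns $\policy(\action) = 0$, which seems to violate the strict positivity implicitly used in the Cauchy--Schwarz equality condition; but the term $c_\action/\policy(\action)$ contributes $0$ to the objective regardless, so these actions can be dropped and the argument applied verbatim on the support of $\targetpolicy(\cdot|\x)$. I would close by observing this creates no support deficiency: the balanced estimator only requires $\blendedpolicy(\action|\x) > 0$ wherever $\targetpolicy(\action|\x) > 0$, which is exactly where $c_\action > 0$ and the solution is strictly positive, consistent with Theorem~\ref{thm:unbiased}.
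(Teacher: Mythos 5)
Your proposal is correct, and it reaches the same reduction as the paper -- set $\alpha=1$, observe the objective collapses to $\sum_a c_a/\policy(a)$ over the simplex with $c_a = \targetpolicy^2(a|x)\expectedsquarer$, and conclude $\policy(a)\propto\sqrt{c_a}$ -- but it certifies the minimizer by a different device. The paper's Appendix \ref{proof:minvar} forms the Lagrangian with only the equality constraint, sets the stationarity conditions $-c_a/\pi_x^2(a)+\lambda=0$, and solves; global optimality is implicitly backed by the convexity result of Appendix \ref{proof:convex}, and the nonnegativity constraints and the degenerate case $\targetpolicy(a|x)=0$ are never discussed. Your Cauchy--Schwarz argument instead produces a global lower bound $\bigl(\sum_a\sqrt{c_a}\bigr)^2\le\sum_a c_a/\policy(a)$ valid for every feasible $\policy$ together with an explicit equality condition, so it certifies optimality over the whole simplex in one step without invoking convexity or KKT reasoning, and your explicit treatment of actions with $c_a=0$ (where the candidate solution sits on the boundary $\policy(a)=0$, outside the reach of the paper's stationarity computation) closes a small gap the paper's proof leaves open. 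The trade-off is that the Lagrange route generalizes mechanically to the $\alpha<1$ case treated elsewhere in the paper, whereas the Cauchy--Schwarz identity is specific to the $\alpha=1$ objective; both are valid here, and your version is the more self-contained of the two.
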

{\sc Proof Sketch.} 
This can be shown using Lagrange multipliers to solve OP\ref{op:mval}, and using the well known result \cite{owen} characterizing the variance-optimal IPS logging policy. Detailed proof is in Appendix \ref{proof:minvar}.\qedsymbol{}

This immediately provides a closed form solution for MVAL in an intuitive special case. 
\begin{proposition}[Target policy optimality without logged data]\label{thm:target} If $\alpha=1$ and $\rewardmeansqprewardvar= c$, then the optimal augmentation policy for single policy evaluation using the balanced estimator is the target policy.\end{proposition}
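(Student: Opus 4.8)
The plan is to read this off as an immediate corollary of Proposition~\ref{thm:minvar}. Setting $\alpha = 1$ places us in precisely the regime covered by that proposition, so the variance-optimal augmentation policy is already available in closed form:
\[
\evalpolicy^{\BAL}(a|x) = \frac{\targetpolicy(a|x)\sqrt{\expectedsquarer}}{\sum_{a'\in\calA}\targetpolicy(a'|x)\sqrt{\Ex_r[r^2(x,a')]}}.
\]
The only remaining work is to specialize this expression to the hypothesis that $\expectedsquarer = c$ is a constant that does not depend on the action.

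Next I would substitute the constant and factor $\sqrt{c}$ out of the numerator and out of every summand in the denominator. Since $\sqrt{c}$ appears identically in each term, it cancels, leaving
\[
\evalpolicy^{\BAL}(a|x) = \frac{\targetpolicy(a|x)}{\sum_{a'\in\calA}\targetpolicy(a'|x)}.
\]
Finally I would invoke the fact that $\targetpolicy(\cdot|x)$ is itself a normalized probability distribution, so the denominator equals $1$ and we conclude $\evalpolicy^{\BAL}(a|x) = \targetpolicy(a|x)$, which is the claimed target-policy optimality.

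There is essentially no obstacle here: once the action-independent reward term is cancelled, the result is forced by the normalization of $\targetpolicy$. The only point worth flagging for rigor is that the cancellation requires $c > 0$, which is guaranteed by the assumption $\rewardvar > 0$; this keeps the square roots well defined and the denominator nonzero, so the argument goes through cleanly.
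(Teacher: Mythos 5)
Your proof is correct and follows essentially the same route as the paper's: both read the result off from Proposition~\ref{thm:minvar} (Equation~\ref{equation:ipsminvar}), cancel the constant $\sqrt{c}$, and conclude by the normalization of $\targetpolicy$. Your explicit remark that $c>0$ is needed for the cancellation is a small but welcome addition of rigor over the paper's one-line argument.
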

\begin{proof}This is a consequence of Equation \ref{equation:ipsminvar}, which states that if $\alpha=1$, then the optimal augmentation policy $\evalpolicy^{\BAL}(a|x) \propto \targetpolicy(a|x)\sqrt{\rewardmeansqprewardvar}$. If $\rewardmeansqprewardvar = c$, then it follows that $\evalpolicy^{\BAL}(a|x) \propto \targetpolicy(a|x)$, and therefore $\evalpolicy^{\BAL}(a|x) = \targetpolicy(a|x)$.\end{proof}

\subsubsection{MVAL Corrects Historic Log Data Towards the Target Policy} 

When the historic data is non-empty, the augmentation policy $\evalpolicy$ minimizes the balanced estimator variance by trying to make the balanced policy $\blendedpolicy$ similar to the minimum variance IPS policy $\minvarpolicy$. Specifically, when there is enough augmentation data for  $\evalpolicy$ to cause $\blendedpolicy=\minvarpolicy$, then that is the solution chosen. In this case, we can even compute the MVAL policy in closed form.

\begin{proposition}[Large-$\alpha$ Closed-Form Solution for MVAL]
\label{thm:bigalpha}
If $\alpha$ is big enough that the mixed policy $\blendedpolicy = (1-\alpha)\oldpolicy + \alpha \evalpolicy$ can be equal to the minimum variance augmentation policy for the IPS estimator $\minvarpolicy$, then the MVAL policy for the balanced estimator is the policy $\pi$ such that $\forall \x \in \calX, \action \in \calA: (1-\alpha)\oldpolicy(\action|x) + \alpha \policy(a|x) = \minvarpolicy(\action|\x)$.
\end{proposition}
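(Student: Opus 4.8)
The plan is to recast OP\ref{op:mval} as an optimization over the achievable balanced policies $\blendedpolicy$ rather than over $\policy = \evalpolicy$ directly, and then exploit the fact that the resulting objective is exactly the one whose unconstrained minimizer over the simplex is $\minvarpolicy$. For a fixed context $\x$, write $m_\action := \blendedpolicy(\action|\x) = (1-\alpha)\oldpolicy(\action|\x) + \alpha\policy(\action)$ and $c_\action := \targetpolicy^2(\action|\x)\expectedsquarer \ge 0$, so that the MVAL objective becomes $f(m) = \sum_{\action} c_\action / m_\action$. I would first pin down the change of variables precisely: as $\policy$ ranges over the probability simplex $\Delta = \{\policy : \sum_\action \policy(\action) = 1,\ \policy(\action) \ge 0\}$, the vector $m$ ranges over $F = \{m \in \Delta : m_\action \ge (1-\alpha)\oldpolicy(\action|\x)\ \forall \action\}$. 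The inclusion $F \subseteq \Delta$ is immediate, and the two defining properties of $F$ follow from $\sum_\action m_\action = (1-\alpha)+\alpha = 1$ and from $m_\action \ge (1-\alpha)\oldpolicy(\action|\x)$ (using $\policy(\action)\ge 0$); conversely any $m \in F$ yields a valid $\policy(\action) = (m_\action - (1-\alpha)\oldpolicy(\action|\x))/\alpha \in \Delta$, so for $\alpha > 0$ the correspondence is a bijection and minimizing the MVAL objective over $\policy \in \Delta$ is equivalent to minimizing $f$ over $m \in F$.

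Next I would invoke the already-established characterization of $\minvarpolicy$. Setting $\alpha = 1$ in OP\ref{op:mval} collapses the constraint set to all of $\Delta$ while leaving the objective equal to the same $f(m)$, so Proposition~\ref{thm:minvar} (equivalently the classical result of \cite{owen}) identifies $\minvarpolicy$ as the unconstrained minimizer $\argmin_{m \in \Delta} f(m)$. I would also record that $f$ is convex in $m$ (and strictly convex along each coordinate with $c_\action > 0$, since the Hessian is diagonal with entries $2c_\action/m_\action^3$), matching the convexity already stated in Appendix~\ref{proof:convex}.

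The crux is then a simple set-inclusion principle. The hypothesis that ``$\alpha$ is big enough that $\blendedpolicy$ can equal $\minvarpolicy$'' is precisely the statement that $\minvarpolicy \in F$, i.e.\ that $\minvarpolicy(\action|\x) \ge (1-\alpha)\oldpolicy(\action|\x)$ for every $\action$. Since $F \subseteq \Delta$ and $\minvarpolicy$ minimizes $f$ over the larger set $\Delta$, we have $f(m) \ge f(\minvarpolicy)$ for every $m \in F$; because $\minvarpolicy \in F$ attains this value, it is also a minimizer over $F$. Unwinding the change of variables, the optimal balanced policy is $\blendedpolicy = \minvarpolicy$, so the MVAL augmentation policy $\evalpolicy$ is exactly the $\policy$ solving $(1-\alpha)\oldpolicy(\action|\x) + \alpha\policy(\action|\x) = \minvarpolicy(\action|\x)$, uniquely determined for $\alpha > 0$, which is the claimed closed form.

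I expect the only delicate step to be articulating the informal hypothesis ``$\alpha$ big enough'' as the exact feasibility condition $\minvarpolicy \in F$, together with the edge coordinates where $c_\action = 0$ (i.e.\ $\targetpolicy(\action|\x) = 0$): there $f$ is flat in $m_\action$, so strict convexity holds only on the support of $\targetpolicy$, and one must note that feasibility of $\minvarpolicy$ forces $(1-\alpha)\oldpolicy(\action|\x) = 0$ on those coordinates (since $\minvarpolicy(\action|\x)=0$ there), which is what lets the set-inclusion argument still pick out a unique optimum after sending those $m_\action$ to $0$. Everything else is bookkeeping around the bijection and the monotonicity of a constrained optimum under set inclusion.
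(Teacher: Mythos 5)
Your proof is correct and takes essentially the same route as the paper's: both arguments rest on the observation that the balanced-estimator objective is the IPS variance evaluated at $\blendedpolicy$, that $\minvarpolicy$ minimizes that objective over the whole simplex, and that a minimizer over the larger set which happens to be feasible for the smaller set (of reachable blends) is therefore optimal there. Your change-of-variables/set-inclusion phrasing and the handling of the $\targetpolicy(\action|\x)=0$ coordinates are just a more explicit rendering of the paper's lower-bound argument.
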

{\sc Proof Sketch.} The basic argument is that the balanced estimator variance is the IPS variance with $\blendedpolicy$ instead of $\oldpolicy$, so if $\evalpolicy$ can make $\blendedpolicy=\minvarpolicy$ then that is optimal. Detailed proof is shown in Appendix \ref{proof:bigalpha}. \qedsymbol{}

The fact that MVAL aims to augment the existing data $\oldD$ so that the overall data looks like it was all sampled from the $\targetpolicy$ (in the case of constant $\expectedsquarer$) has an interesting implication for the overall utility. In particular, MVAL will ensure an overall utility as if all of $\oldD$ and $\evalD$ had been sampled from $\targetpolicy$. Since it is the prior belief in many A/B tests that the target policy $\targetpolicy$ is better than the logging policy $\oldpolicy$, this means that MVAL will improve utility in addition to sampling the most informative data.

\subsubsection{Introducing a New Action}

A common way a new target policy is different from the logging policy is through the introduction of a new action (e.g., a new movie). The following shows that MVAL's behavior matches the intuition that this new action should now be sampled by the augmentation policy.

\begin{proposition}[Introducing new actions]
\label{thm:newaction}
If there is an action $a \in \calA$ such that $\targetpolicy(a|x) > 0 $ but $\oldpolicy(a|x) = 0$, then the variance minimizing $\evalpolicy$ is such that $\evalpolicy(a|x) > 0$.
\end{proposition}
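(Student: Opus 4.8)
The plan is to show that setting $\evalpolicy(a|x) = 0$ drives the MVAL objective to $+\infty$, so the minimizer of OP\ref{op:mval} is forced to keep mass on $a$. First I would isolate the single summand of the OP\ref{op:mval} objective corresponding to the action $a$ in the hypothesis. Since $\oldpolicy(a|x) = 0$, the denominator $(1-\alpha)\oldpolicy(a|x) + \alpha\policy(a)$ collapses to $\alpha\policy(a)$, so this term reads $\targetpolicy^2(a|x)\,\expectedsquarer / (\alpha\policy(a))$, which is exactly the term already identified in the proof of Theorem~\ref{thm:unbiased}.

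Next I would establish that the numerator $\targetpolicy^2(a|x)\,\expectedsquarer$ is strictly positive. This uses the hypothesis $\targetpolicy(a|x) > 0$ together with $\expectedsquarer = \rewardmeansq + \rewardvar > 0$, which holds because $\rewardvar > 0$. Consequently, as $\policy(a)\downarrow 0$ the summand diverges to $+\infty$, while every other summand in the objective is nonnegative; hence any feasible $\policy$ with $\policy(a)=0$ yields an infinite objective. To rule this out, I would exhibit a feasible interior point: the uniform distribution over $\calA$ satisfies the simplex constraints and assigns strictly positive mass to every action, so it attains a finite objective value. Therefore no $\policy$ with $\policy(a)=0$ can minimize the objective, and the variance-minimizing $\evalpolicy$ must satisfy $\evalpolicy(a|x) > 0$, as claimed.

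I expect the argument to be essentially immediate, since it is the same divergence-at-the-boundary mechanism used in Theorem~\ref{thm:unbiased}, here specialized to a single newly introduced action; convexity is not even needed, only the infinite-versus-finite comparison. The one point deserving care is the strict positivity of $\expectedsquarer$: if $\expectedsquarer$ were allowed to vanish for the relevant $(x,a)$, the offending summand would instead be $0/(\alpha\policy(a))$ and would no longer penalize $\policy(a)=0$. Thus the only (minor) obstacle is to confirm that whichever surrogate for $\expectedsquarer$ is used stays strictly positive, and I would note that $\rewardvar > 0$ guarantees this for the true second moment, while a constant surrogate or a regression estimate can likewise be taken strictly positive, so the conclusion is unaffected by how $\expectedsquarer$ is computed.
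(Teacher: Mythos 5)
Your proof is correct and is essentially the paper's argument: the paper proves this proposition by reducing it to Theorem~\ref{thm:unbiased}, whose own proof is exactly the divergence-at-the-boundary mechanism you spell out (the term $\targetpolicy^2(a|x)\expectedsquarer/(\alpha\policy(a))$ blows up as $\policy(a)\downarrow 0$ while the uniform policy is feasible with finite objective). You have merely inlined that argument rather than citing the theorem, and your added care about the strict positivity of $\expectedsquarer$ matches the hypothesis $\rewardvar>0$ already present in Theorem~\ref{thm:unbiased}.
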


\begin{proof}This is simply a restatement of Theorem \ref{thm:unbiased}, which states that MVAL produces unbiased estimates for any $x$ such that $\expectedsquarer > 0$. If $\targetpolicy(a|x) > 0 = \oldpolicy(a|x)$ and $0 = \evalpolicy(a|x)$, then it would be a biased estimate. This contradicts Theorem \ref{thm:unbiased}, so $\evalpolicy(a|x) > 0$.\end{proof}

\subsubsection{Deterministic Target Policies}
\begin{proposition}[Deterministic target policy]\label{thm:deterministic} If $\targetpolicy$ is deterministic, then the optimal augmentation policy for single policy evaluation using the balanced estimator is $\targetpolicy$.\end{proposition}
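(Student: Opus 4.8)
The plan is to exploit the fact that a deterministic target policy annihilates all but one term of the MVAL objective, reducing OP\ref{op:mval} to a one-dimensional maximization. First I would fix a context $x$ and let $a^\ast$ be the unique action with $\targetpolicy(a^\ast|x) = 1$, so that $\targetpolicy(a|x) = 0$ for every $a \neq a^\ast$. Substituting this into the objective of OP\ref{op:mval}, each summand indexed by $a \neq a^\ast$ carries the factor $\targetpolicy^2(a|x) = 0$ and therefore drops out, leaving the single term
\[
\frac{\Ex_r\!\left[r^2(x,a^\ast)\right]}{(1-\alpha)\oldpolicy(a^\ast|x) + \alpha\,\policy(a^\ast)}.
\]

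Next I would note that the numerator $\Ex_r\!\left[r^2(x,a^\ast)\right]$ and the offset $(1-\alpha)\oldpolicy(a^\ast|x)$ are both constants independent of the decision variable $\policy$, with the numerator strictly positive by the standing assumption $\rewardvar > 0$. Hence for any $\alpha > 0$, minimizing this ratio is equivalent to maximizing the denominator, i.e.\ to maximizing $\policy(a^\ast)$ subject to the simplex constraints $\sum_{a \in \calA}\policy(a) = 1$ and $\policy(a) \geq 0$. The largest feasible value of $\policy(a^\ast)$ is $1$, attained uniquely when $\policy(a^\ast) = 1$ and, by normalization, $\policy(a) = 0$ for all $a \neq a^\ast$. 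This policy is exactly the deterministic target, so $\evalpolicy(\actionRV|x) = \targetpolicy(\actionRV|x)$; since the argument runs independently for each $x$, it yields $\evalpolicy = \targetpolicy$ globally.

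I do not expect the core argument to present any real obstacle — the collapse of the sum to a single term makes the minimizer immediate. The only points requiring care are degenerate cases. When $\alpha = 0$ there is no augmentation data and the objective does not depend on $\policy$ at all, so the claim is vacuous and can be excluded. The assumption $\rewardvar > 0$ (equivalently $\expectedsquarer > 0$) is what guarantees the surviving numerator is a genuine positive constant rather than zero, which is precisely what licenses the reduction to maximizing the denominator. It is worth emphasizing that, in contrast to Proposition \ref{thm:target}, this result needs neither $\alpha = 1$ nor a constant $\expectedsquarer$: the deterministic structure of $\targetpolicy$ already collapses the objective to a single reward-weighted term whose coefficient is irrelevant to the location of the minimizer.
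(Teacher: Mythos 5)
Your proof is correct and follows essentially the same route as the paper's: both collapse the variance objective to the single term indexed by the deterministic action and then observe that this term is monotonically decreasing in $\evalpolicy(a^\ast|x)$, so putting all augmentation mass on $a^\ast$ is optimal. The only cosmetic difference is that the paper establishes the monotonicity by checking the sign of the partial derivative, whereas you argue directly that minimizing the ratio amounts to maximizing the denominator; your explicit handling of the $\alpha=0$ degenerate case and the role of $\expectedsquarer>0$ is a welcome bit of extra care that the paper leaves implicit.
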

{\sc Proof Sketch.} The basic argument is that if $\targetpolicy(a|x)=0$ for some action, then that action contributes nothing to the balanced estimator variance, and since the variance contribution for each action decreases with more probability, using the deterministic target policy is optimal. Detailed proof is shown in Appendix \ref{proof:deterministic}. \qedsymbol{}

\subsubsection{Variance Reduction through Augmentation Logging can be Large}

One final point is that adding even a single augmentation data point can substantially decrease the estimator's variance. While the variance reduction of a single point depends on the specific logging and target policies, there is no upper bound on the variance reduction achievable.
\begin{proposition}[Variance reductions of a single data point]
\label{thm:varunbounded}
There is no upper bound on the variance decrease attained by adding a single augmentation sample.
\end{proposition}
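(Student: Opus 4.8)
The plan is to exhibit an explicit one-parameter family of evaluation instances in which the balanced estimator's variance with the single augmentation sample stays uniformly bounded, while the variance obtained by deleting that sample diverges. Since the claim is that no finite upper bound on the decrease exists, it suffices to show that for every $M>0$ some instance in the family has variance decrease exceeding $M$, which I would get by driving a support parameter to zero.

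First I would reduce to the simplest setting that still exposes the effect: a single context $x$ (so the outer $\mathbb{E}_{\x}$ in Equation~\eqref{equation:balancedvar} drops out) with two actions $a_1,a_2$, a deterministic target $\targetpolicy(a_1|x)=1$, $\targetpolicy(a_2|x)=0$, and a logging policy $\oldpolicy(a_1|x)=\epsilon$, $\oldpolicy(a_2|x)=1-\epsilon$ for $\epsilon\in(0,1)$, with $\expectedsquarer=\sigma^2>0$ on the relevant action. Because $\targetpolicy^2(a_2|x)=0$, only $a_1$ contributes to the sum in Equation~\eqref{equation:balancedvar}, which keeps every computation a single term.

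Next I would evaluate both variances directly from Equation~\eqref{equation:balancedvar}. Without the augmentation sample, $\evaln=0$, $\alpha=0$, $N=\oldn$, and $\blendedpolicy=\oldpolicy$, so the variance is $\tfrac{1}{\oldn}\bigl(\tfrac{\sigma^2}{\epsilon}-R_{\targetpolicy}^2\bigr)$, whose diverging term $\tfrac{\sigma^2}{\oldn\,\epsilon}$ grows without bound as $\epsilon\to0$. With one augmentation sample drawn from the target, $\evalpolicy(a_1|x)=1$, we have $\evaln=1$, $\alpha=\tfrac{1}{\oldn+1}$, $N=\oldn+1$, and $\blendedpolicy(a_1|x)=(1-\alpha)\epsilon+\alpha\ge\alpha=\tfrac{1}{\oldn+1}$; substituting gives a variance bounded above by $\sigma^2$ uniformly in $\epsilon$, since the denominator is bounded below. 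Subtracting the two, the decrease grows without bound as $\epsilon\to0$. Because the MVAL policy is variance optimal, the decrease it achieves is at least the one from this target-policy choice, so the unbounded gap transfers to MVAL itself.

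The step requiring the most care is the bookkeeping of the comparison: adding one sample changes $\evaln$, $N$, and $\alpha$ simultaneously, so I must substitute all three consistently into Equation~\eqref{equation:balancedvar} rather than merely swapping $\oldpolicy$ for $\blendedpolicy$. The only genuinely substantive point is to ensure that both compared quantities remain finite for each fixed $\epsilon$, so that the statement concerns arbitrarily large \emph{finite} decreases rather than a degenerate division by zero; taking $\epsilon\to0$ along the family then yields the unbounded gap. One could instead set $\oldpolicy(a_1|x)=0$ outright, making the no-augmentation variance literally infinite, but the $\epsilon$-family is cleaner because every quantity in sight stays finite.
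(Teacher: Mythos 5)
Your proposal is correct, and it is essentially a rigorous instantiation of the ``basic argument'' with which the paper's own appendix proof opens: a target-favored action with logging probability $\epsilon$ contributes $\Theta(\sigma^2/\epsilon)$ to the variance, one augmentation sample playing that action caps the blended propensity below by $\alpha=\tfrac{1}{\oldn+1}$, and letting $\epsilon\to 0$ makes the gap exceed any $M$. Your two-action, single-context, deterministic-target instance checks out: without augmentation the variance is $\tfrac{1}{\oldn}\bigl(\tfrac{\sigma^2}{\epsilon}-R_{\targetpolicy}^2\bigr)$, and with one sample from the (here MVAL-optimal, by Proposition~\ref{thm:deterministic}) target it is $\tfrac{\sigma^2}{\oldn\epsilon+1}-\tfrac{R_{\targetpolicy}^2}{\oldn+1}\le\sigma^2$ uniformly in $\epsilon$, and you correctly track that $N$, $\evaln$, and $\alpha$ all change together. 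The one place you diverge from the paper is the choice of baseline: you measure the decrease from $N$ logged points to $N$ logged plus one augmentation point, whereas the paper's detailed appendix argument compares two $(N{+}1)$-sample configurations --- spending the extra sample on $\oldpolicy$ versus on a deterministic augmentation policy $\delta_{a'}$ --- and works in a general $|\calA|$-action setting via the action maximizing $V_{x,a}$. The paper's version buys a stronger statement (the gain is unbounded even relative to collecting one more point the old way, netting out the trivial $\tfrac{1}{N}\to\tfrac{1}{N+1}$ shrinkage), while yours is the more literal reading of ``adding a single augmentation sample'' and is cleaner and fully explicit; either suffices for the proposition as stated.
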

{\sc Proof Sketch.} Consider that when the logging policy assigns almost no probability to an action with a large probability under the target policy, then that action has an arbitrarily large variance contribution. Adding a single data point with a policy guaranteed to play that action results in an arbitrarily large variance decrease, since $\frac{1}{\epsilon} - \frac{N+1}{N\epsilon + 1}$ can be made arbitrarily large. A detailed proof is shown in Appendix \ref{proof:varunbounded}.\qedsymbol{}

\subsection{Pre-Computing MVAL Policies}

So far we have assumed that we simply solve the MVAL optimization problem for each individual context $x$ as it comes in. This is realistic in most applications, since these optimization problems are convex (see Appendix \ref{proof:convex}) and no bigger than the number of available actions. However, some applications may have latency requirements where even this modest amount of computation is not feasible. We therefore ask whether we can learn a general MVAL policy that applies to any context $\x$ ahead of time, so that this policy merely needs to be executed during deployment.

We approach the problem of learning a general MVAL policy as the following optimization problem. Given a parameterized space $\policyspace$ of policies and a sample of contexts $\{x_i\}_{i=1}^N$, find the augmentation policy $\evalpolicy \in \policyspace$ that minimizes the sum of the variances over all $N$ sample contexts.
\begin{optimizationproblem}[Pre-Computed MVAL Policy] \label{op:precompmval}
\begin{align*}
\argmin_{\policy \in \policyspace} \quad & \sum_{\x_i \in \calD}\sum_{\action \in \calA}\dfrac{\targetpolicy^2(\action|\x_i)
\Ex_r\sqaren{r^2(x_i, a)}
}{(1-\alpha)\oldpolicy(a|x_i) + \alpha \pi(a|x_i)}\\
\textrm{subject to} \quad & \sum_{a\in \calA}\pi_x(a) = 1, \\
&\pi_x(a) \geq 0   \text{ for all } a \in \calA
\end{align*}
\end{optimizationproblem}

This is akin to an empirical risk minimization objective, as an augmentation policy that minimizes the objective (here variance) on a large sample of training points can be expected to also produce good objective values on new contexts under standard conditions on the capacity of $\policyspace$. We will compare these pre-computed MVAL policies to on-the-fly computed MVAL policies in the experiments.

\section{Multi-Policy Evaluation}
\label{sec:multipol}
The previous section showed how to optimally augment an existing dataset $\oldD$ when evaluating a single target policy $\targetpolicy$. However, in practice an offline A/B test may want to evaluate a set of competing target policies $\targetpolicyspace = \{\policy_1, ... \policy_k\}$, especially when we want to learn a new policy $\policy^*$ through Empirical Risk Minimization (ERM) using the balanced estimator: 
\begin{equation}
    \policy^* = \argmax_{\policy \in \targetpolicyspace} \hat R^{\text{BAL}}_{\policy}
\end{equation}
We therefore ask the question of how to compute an MVAL policy that minimizes the maximum variance for any target policy in $\targetpolicyspace$
\begin{equation} \label{eq:minmaxvar}
    \evalpolicy(A|x)= \argmin_{\evalpolicy}\max_{\policy \in \targetpolicyspace} \text{Var}\left[\hat R^{\text{BAL}}_{\policy}(x)\right],
\end{equation}
where $R^{\text{BAL}}_{\policy}(x)$ is the estimate of the expected reward of the target policy in context $x$.

While solving Equation~\eqref{eq:minmaxvar} directly can be challenging, the following change to the optimization target provides a bound on the variance for any $\targetpolicy$ in a class of policies $\targetpolicyspace$. Optimizing this bound results in the following optimization problem, which is not more complex than single policy evaluation beyond the calculation of $\max_{\policy \in \targetpolicyspace}\policy(\action|\x)$.

\begin{theorem}[Policy Class Variance Bound]
\label{thm:policyset}
Given a class of policies $\policyspace$, then $\forall \pi \in \Pi$,
$$\text{Var}\left[\hat R^{\text{BAL}}_{\pi}\right] \leq \dfrac{1}{N}\Ex_x\sqaren{\sum_{a \in \calA}\dfrac{\maxpolicy^2(a|x)\expectedsquarer}{\blendedpolicy(a|x)}}.$$
where $\maxpolicy(a|x) = \max_{\policy \in \policyspace}\policy(\action|\x)$.
\end{theorem}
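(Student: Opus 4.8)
The plan is to obtain the bound as a monotone relaxation of the exact variance formula in Equation~\eqref{equation:balancedvar}, in two elementary steps. Fix an arbitrary $\pi \in \policyspace$. Instantiating Equation~\eqref{equation:balancedvar} for this $\pi$ gives
$$\text{Var}\left[\hat R^{\text{BAL}}_{\pi}\right] = \dfrac{1}{N}\left(\Ex_x\left[\sum_{a \in \calA}\dfrac{\pi^2(a|x)\expectedsquarer}{\blendedpolicy(a|x)}\right] - R_{\pi}^2\right),$$
and the goal is to show that, after a suitable enlargement, the bracketed expectation is dominated by the corresponding expression with $\maxpolicy$ in the numerator.

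First I would discard the $-R_\pi^2$ term. Because $R_\pi^2 \geq 0$, dropping it only increases the right-hand side, so
$$\text{Var}\left[\hat R^{\text{BAL}}_{\pi}\right] \leq \dfrac{1}{N}\Ex_x\left[\sum_{a \in \calA}\dfrac{\pi^2(a|x)\expectedsquarer}{\blendedpolicy(a|x)}\right].$$
Second, I would bound the summand pointwise. Since $\pi \in \policyspace$ and $\maxpolicy(a|x) = \max_{\policy \in \policyspace}\policy(a|x)$, we have $0 \leq \pi(a|x) \leq \maxpolicy(a|x)$ for every $a$ and $x$, hence $\pi^2(a|x) \leq \maxpolicy^2(a|x)$. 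As $\expectedsquarer \geq 0$ and $\blendedpolicy(a|x) > 0$ wherever the term appears, the factor multiplying $\pi^2(a|x)$ is non-negative, so replacing $\pi^2(a|x)$ by $\maxpolicy^2(a|x)$ preserves each inequality. Summing over $a$ and taking $\Ex_x$ — operations that preserve inequalities between non-negative quantities — yields exactly the claimed bound.

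The one point worth flagging is that $\maxpolicy(\cdot|x)$ is formed by maximizing action-by-action and therefore need not be normalized, so it is generally not itself a member of $\policyspace$; this is harmless, since $\maxpolicy$ is used purely as a pointwise upper bound and normalization is never invoked. The payoff of this relaxation is that the resulting numerator no longer depends on which $\pi \in \policyspace$ is being evaluated, so the single bound holds uniformly over the entire class and can serve as the optimization target for the multi-policy MVAL problem. I expect no genuine obstacle here: the argument is a straightforward term-by-term domination, and the only care needed is in noting that every discarded or replaced quantity is non-negative, so that each step is a legitimate relaxation rather than an equality.
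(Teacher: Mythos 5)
Your proposal is correct and follows essentially the same route as the paper's proof: a pointwise domination $\pi^2(a|x) \leq \maxpolicy^2(a|x)$ applied inside the sum and expectation from Equation~\eqref{equation:balancedvar}. If anything, you are slightly more careful than the paper, which silently omits the $-R_{\pi}^2$ term when writing the variance, whereas you explicitly justify discarding it as a legitimate relaxation.
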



\begin{proof}
By the definition, for all $\pi \in \Pi$, $$\pi(a|x) \leq \max_{\pi' \in \Pi}\pi'(a|x) = \maxpolicy(a|x).$$
For all $a, x$, since $\sigma^2(x, a)$, $r^2(x, a)$, and $\blendedpolicy(a|x)$ are all positive,
$$0 \leq \dfrac{\expectedsquarer}{\blendedpolicy(a|x)}.$$
Therefore, for all $\pi \in \Pi$, and all $a, x$,
$$\dfrac{\pi^2(a|x)\expectedsquarer}{\blendedpolicy(a|x)} < \dfrac{\maxpolicy^2(a|x)\expectedsquarer}{\blendedpolicy(a|x)}.$$
Therefore, for all $\pi \in \Pi$,
\begin{align*}
\text{Var}\left[\hat R^{\text{BAL}}_{\pi}\right] &= \dfrac{1}{N}\Ex_x\sqaren{\sum_{a \in \calA}\dfrac{\pi^2(a|x)\expectedsquarer}{\blendedpolicy(a|x)}} \\
&\leq  \dfrac{1}{N}\Ex_x\sqaren{\sum_{a \in \calA}\dfrac{\maxpolicy^2(a|x)\expectedsquarer}{\blendedpolicy(a|x)}}
\end{align*}
\end{proof}

The structure of this bound results in the same optimization problems as the ones for single-policy optimization. In particular, the augmentation policy that minimizes the maximum value of the variance bound for any $\policy \in \policyspace$ can be found by solving the following optimization problem, where $\maxpolicy$ replaces the $\targetpolicy$ of OP\ref{op:mval}.

\begin{optimizationproblem}[MVAL for Multi-Policy Evaluation] \label{op:mvalmulti}
\begin{align*}
\argmin_{\pi_{x} \in \calR^{|\calA|}} \quad & \sum_{a \in \calA}\dfrac{\maxpolicy^2(a|x)\expectedsquarer}{(1-\alpha)\oldpolicy(a|x) + \alpha \pi_x(a)}\\
\textrm{subject to} \quad & \sum_{a\in \calA}\pi_x(a) = 1, \\
&\pi_x(a) \geq 0   \text{ for all } a \in \calA
\end{align*}
\end{optimizationproblem}

This formulation also applies to the problem of augmentation logging for learning, since many algorithms involve some notion of an ambiguity class or trust region around the current learned policy $\pi$. OP\ref{op:mvalmulti} allows one to apply MVAL whenever $\maxpolicy(a|x)$ can be efficiently computed for these policy classes $\Pi$. We can even simplify OP\ref{op:mvalmulti} for certain types of trust regions.
For example, the trust region policy class $\Pi_{\targetpolicy}=\{\pi | \forall x,a: \pi(a|x) \in [\frac{1}{\tau}\cdot\targetpolicy(a|x),\tau\cdot \targetpolicy(a|x)]\}$ can be approximated by setting $\maxpolicy \approx \targetpolicy$ for small $\tau \geq 1$.
The argument is that the solution of OP\ref{op:mvalmulti} is invariant to $\tau$, since $\maxpolicy(a|x) = \tau \cdot \targetpolicy(a|x)$ as long as $\targetpolicy(a|x)\le \frac{1}{\tau}$.

\subsection{Analysis and Discussion}

We again illustrate and discuss the behavior of MVAL, now for the case of multi-policy evaluation and learning. An instructive limiting case is the situation where there is no past data, no restrictions on the policy class $\Pi$, and no knowledge about $\rewardmeansqprewardvar$. In this case, it seems that one should sample from the uniform policy. We find this intuition agrees with MVAL for the case where $\Pi$ is all valid policies, there is no logged data, and there is an uninformative reward model with $\rewardmeansqprewardvar = c >0$.

\begin{proposition}[Uninformed Multi-Policy Evaluation]
\label{thm:uniform}
If $\text{ }\Pi$ is the space of all valid policies, there is no existing logged data, and $\rewardmeansqprewardvar = c > 0$ for all $(x, a)$, then the optimal augmentation policy is the uniform distribution.
\end{proposition}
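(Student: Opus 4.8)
The plan is to substitute the three hypotheses directly into the per-context objective of OP\ref{op:mvalmulti} and then solve the resulting elementary convex program. First I would pin down what each hypothesis contributes. Since there is no historic data we have $\oldn = 0$, hence $\alpha = \evaln/(\evaln + \oldn) = 1$, so the denominator $(1-\alpha)\oldpolicy(a|x) + \alpha\pi_x(a)$ collapses to $\pi_x(a)$. Since $\policyspace$ is the space of all valid policies, it contains the deterministic policy placing all its mass on any fixed action $a$, so $\maxpolicy(a|x) = \max_{\policy \in \policyspace}\policy(a|x) = 1$ for every $a$ and every $x$. Finally $\expectedsquarer = c$ is a positive constant. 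Substituting all three into OP\ref{op:mvalmulti}, the objective reduces to $c\sum_{a \in \calA} 1/\pi_x(a)$.

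It then remains to minimize $\sum_{a\in\calA} 1/\pi_x(a)$ over the probability simplex. I would handle this either with Lagrange multipliers --- setting the gradient of $\sum_{a\in\calA} 1/\pi_x(a) - \lambda\big(\sum_{a\in\calA} \pi_x(a) - 1\big)$ to zero gives $\pi_x(a)^{-2} = \lambda$ for every $a$, forcing all coordinates to be equal --- or more cleanly via the Cauchy--Schwarz (equivalently AM--HM) inequality $\big(\sum_{a\in\calA} 1/\pi_x(a)\big)\big(\sum_{a\in\calA} \pi_x(a)\big) \ge |\calA|^2$, which holds with equality exactly when all the $\pi_x(a)$ coincide. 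Combined with the constraint $\sum_{a\in\calA} \pi_x(a) = 1$, this yields $\pi_x(a) = 1/|\calA|$ for all $a$, i.e.\ the uniform distribution. This is a genuine global minimizer rather than a stationary point because the objective is convex on the simplex (Appendix~\ref{proof:convex}).

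The only step requiring any care is the observation that $\maxpolicy(a|x) = 1$ for every action, which is precisely what makes the weight $\maxpolicy^2(a|x)$ identical across actions, rendering the objective symmetric in the $\pi_x(a)$ and hence minimized at the uniform distribution; without the assumption that $\policyspace$ is the full policy class this would fail, and the optimum would instead be tilted toward actions with larger worst-case target probability (as in the single-policy case of Proposition~\ref{thm:target}). Everything else is a routine reduction followed by a standard simplex optimization, so I do not anticipate a substantive obstacle.
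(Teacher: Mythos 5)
Your proposal is correct and follows essentially the same route as the paper: both reduce the problem via $\alpha=1$, observe that $\maxpolicy(a|x)=1$ when $\Pi$ is the full policy class, use the constant $\expectedsquarer=c$, and arrive at the uniform distribution. The only cosmetic difference is that the paper plugs these facts into the already-derived $\alpha=1$ closed form of Equation~\eqref{equation:ipsminvar}, whereas you re-solve the resulting simplex minimization of $\sum_{a}1/\pi_x(a)$ directly (by Lagrange multipliers or AM--HM), which is the same computation that underlies that closed form.
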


\begin{proof}
\label{proof:uniform}
First, recall that when $\alpha=1$, the optimal augmentation policy is 

$$\dfrac{\maxpolicy(a|x)\sqrt{\rewardmeansqprewardvar}}{\sum\limits_{a\in\calA} \maxpolicy (a|x)\sqrt{\rewardmeansqprewardvar}}.$$

Since $\maxpolicy(a|x) = \max_{\pi \in \Pi} \pi(a|x)$, then if if $\Pi$ is all valid policies then $\maxpolicy(a|x) = 1$ for all $a$. Without a reward distribution model, $\rewardmeansqprewardvar=c>0$, so the optimal policy is the uniform policy:

$$\dfrac{1\sqrt{\rewardmeansqprewardvar}}{\sum\limits_{a\in\calA} 1\sqrt{\rewardmeansqprewardvar}} = \dfrac{\sqrt{c}}{\sum\limits_{a\in\calA}\sqrt{c}} = \dfrac{\sqrt{c}}{|\calA|\sqrt{c}} = \dfrac{1}{|\calA|}.$$
\end{proof}

\section{Empirical Evaluation} \label{sec:experiments}


To evaluate MVAL on a real-world contextual bandit problem, we performed experiments on the Yahoo!\ Front Page Dataset \cite{yahoodata}\footnote{\label{license}This dataset was obtained from Yahoo! Webscope at \url{http://research.yahoo.com/Academic_Relations}. \label{identifiable} The users are anonymized.}. Each context in the dataset consists of a $5$-dimensional vector $\mathbf u$ representing the user, as well as a $D\times 5$ dimensional matrix with vectors $\mathbf A_i$ for each of the $D$ articles. For convenience, we only used contexts with exactly $19$ articles. The dataset includes which article was recommended (i.e., the action) and if the user clicked on the article (i.e., the reward).

When collecting this dataset, the article recommendations were chosen uniformly at random from the articles available for the context. This allows for an unbiased simulation of running a different article recommendation policy $\pi$ using rejection sampling \cite{krause_yahoo}. To construct a sample for a new policy $\pi$, we iterate through the contexts $x_i$ and sample $a' \sim \pi(\cdot|x)$ according to $\pi$. If the $a'$ sampled from our policy agrees with the observed action $a_i$, we include that $(x_i, a_i, r_i)$ tuple. This gives us an unbiased sample that comes from the same distribution as if we were running our new policy $\pi$ on the operational system.


\vspace*{-1mm}\paragraph{Model Architecture and Training.}
The policy architecture for precomputed MVAL is a feedforward neural network, ending in a softmax layer where the logit for article $j$ is based on the concatenation of the user vector to the article vectors $\mathbf u \circ \mathbf A_j$ passed through 2 fully connected ReLU \cite{relu} layers with $256, 256$ nodes before $1$ fully connected linear node. Adam is used for optimization with the standard parameters $\alpha=0.001$, $\beta_1=0.9$, $\beta_2=0.999$ \cite{adam} and a batch size of $10000$. All experiments can be run on a desktop with an RTX2080. \label{specs}

\vspace*{-1mm}\paragraph{Estimating $\rewardmeansqprewardvar$.} The first set of experiments use a uniform $\rewardmeansqprewardvar$, since these experiments explore how any decrease in variance is a result of the algorithmic improvement, rather than a result of a model of $\rewardmeansqprewardvar$. For the sequential learning experiment, we use a feedforward neural network to approximate $\Ex\sqaren{r^2(x,a)}$, trained using mean squared error to perform regression on the quantity $r^2(x,a)$ using the same architecture as the policy networks described above.

\vspace*{-1mm}\paragraph{Generating Logging and Target Policies.}\label{crosspolicy} 
The policy evaluation experiments use randomly generated target and logging policies.
We control the generating process to vary how deterministic the logging policy's actions are, and how different the target and logging policies are.
In particular, to generate the logging policy, we randomly sample a vector $\mathbf v \in \mathbb{R}^{25}$ such that $v_i \sim \mathcal{N}(\mu=0, \sigma=1)$. This vector is then multiplied against a feature vector of the cross terms between $u$ and each $A_i$ for the given context $i$ to allow for interactions between user and article features while remaining a simple policy class. Then, the articles are ranked according to this value to define the selection probabilities of the policy based on the rank. In particular, the probability of choosing each article is proportional to $\eta^{\text{rank}_i}$, where $\text{rank}_i$ is the rank of the $i$th article. This allows us to increase the determinism of the policy by increasing $\eta$. We use the same construction to generate a target policy, but shift a $\delta$ fraction of the probability weight from the top-ranked article under the logging policy to the article that is ranked second under the logging policy. Increasing $\delta$ allows us to increase the difference between the target and logging policy.

\begin{figure}[t]
\includegraphics[width=.47\textwidth]{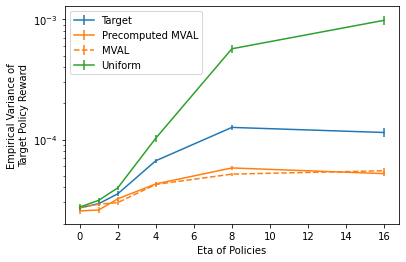}
\vspace*{-4mm}
\caption{Variance of the balanced estimator while holding $\delta=0.4$ and increasing values of $\eta$, which increases the determinism of the policies. 
Error bars are the standard error based on $20$ trials. 
Note the logarithmic y axis.
}
\label{fig:eta}
\end{figure}

\begin{figure}[t]
\includegraphics[width=.47\textwidth]{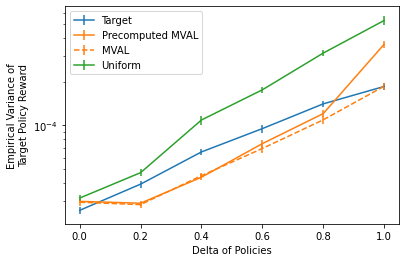}
\vspace*{-4mm}
\caption{Variance of the balanced estimator while holding $\eta=4$ and increasing values of $\delta$, which increases the difference between the logging and target policies.
Error bars are based on $20$ trials.
}
\label{fig:frac}
\end{figure}

\subsection{Single Policy Evaluation}
The first experiments explore the effectiveness of MVAL for evaluating a single target policy while varying $\eta$ and $\delta$ for the logging and target policy generation. The reported variance is the empirical variance of $50$ sampled value estimates for the target policy. Each value estimate is generated by first sampling $900$ data points from the logging policy. We then compare different augmentation logging policies that are allow to sample $100$ additional data point. Specifically, we compare MVAL and precomputed MVAL to using the target policy or the uniform policy for augmentation logging. Finally, we estimating the value of the target policy using the balanced estimator on all $1000$ data points. The standard error bars are based on of $20$ such runs for each parameter setting.

In Figure \ref{fig:eta}, we see that using MVAL substantially outperforms using the target policy or uniform policy for augmentation logging for a range determinism factors $\eta$. The difference $\delta$ between logging and target policy is fixed at $0.4$ in this experiment. At $\eta=0$, the logging policy is the uniform distribution, and all policies have similar performance. We find that precomputed MVAL performs comparable to MVAL over the whole range of $\eta$.

In Figure \ref{fig:frac}, we vary the value of $\delta$ while the determinism factor $\eta$ is fixed. Again, we find that using MVAL to generate the augmentation policy substantially outperforms using the target policy or uniform distribution for a broad range of settings. Increasing the change fraction $\delta$ decreases the performance of all methods as expected. At $\delta=1$, MVAL and target tie because the target policy is fairly deterministic, and per Proposition \ref{thm:deterministic} the target policy is the variance-optimal augmentation policy. At  $\delta=1$, precomputed MVAL performs worse than exact MVAL, possibly because it is training a policy based on $(x, a, r)$ tuples close to the logging policy rather than the quite different target.

\begin{figure}
\includegraphics[width=.47\textwidth]{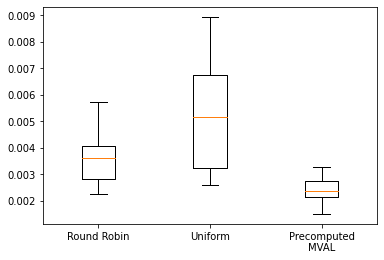}
\vspace*{-3mm}
\caption{Variance of multiple policy evaluation for 3 policies with determinism factor $\eta=4$ and change fraction $\delta=0.4$. Each box plot contains all the empirical variances of all 3 policy estimates, for $20$ trials of the full experiment.
}
\label{fig:kway}
\end{figure}

\subsection{Multiple Policy Evaluation}

The next experiment explores the effectiveness of MVAL for multi-policy evaluation. The policies for this experiment were generated as before with determinism factor $\eta=4$ and a difference between logging and target policy of $\delta=0.4$. However, the $3$ target policies were constructed by shifting $\delta$ fraction of the logging policy's top action's probability to the $2$nd, $3$rd, and $4$th actions. The experiment variance is the empirical variance of $100$ runs of using the balanced estimator to estimate the target policy reward. The round-robin strategy uses each target policy to sample $333$ augmentation data points, while the other strategies collected $999$ augmentation data points from the uniform or precomputed MVAL policy. The original logging policy provided $9001$ data points for a total of $10000$ contexts. 
As shown in Figure \ref{fig:kway}, precomputed MVAL substantially outperforms the round-robin and uniform strategies.

\subsection{Policy Learning}
\begin{figure}
\includegraphics[width=.47\textwidth]{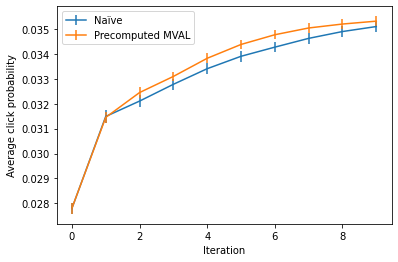}
\vspace*{-3mm}
\caption{Average cumulative reward for sequentially using naïve or MVAL augmentation logging at each iteration for $\tau=1$. Error bars are the standard error based on $50$ trials.}
\label{fig:sbo}
\end{figure}

The final experiment explores the effectivess of MVAL for policy learning over time, where we repeatedly gather $10000$ additional augmentation data points and retrain the model using POEM \cite{poem} with the balanced estimator as policy learner.
In the initial iteration we gather $10000$ data points from the uniform policy, following the optimal MVAL strategy in this situation according to Proposition \ref{thm:uniform}. After that, each iteration $t$ trains a policy $\pi^t$ using POEM \cite{poem} with the balanced estimator as policy learner. We compare using MVAL for augmentation logging to naively logging additional data from the target policy $\pi^t$ in each iteration. We use precomputed MVAL with $\rewardmeansqprewardvar = m^t(x, a)$, where $m^t$ approximates $\expectedsquarer$ using regression. Furthermore, we approximate $\maxpolicy \approx \pi^t$ as discussed in Section~\ref{sec:multipol}. For both the naive augmentation logging using $\pi^t$ and MVAL augmentation logging we train POEM for $1000$ epochs, which is more than sufficient for convergence. The POEM clipping parameter is set to $10000$, which is chosen to maximize the performance of the naive method. Figure \ref{fig:sbo} shows that augmentation logging via MVAL outperforms naively using the target policy for augmentation logging.

\section{Conclusions}

We introduced and formalized the problem of augmentation logging, and derived MVAL as a principled and practical method for variance-optimal data gathering for off-policy evaluation. We extended the approach to multi-policy evaluation and batch learning, and find that it can substantially improve estimation and learning quality over naive methods. This work opens up a number of directions for future work. For example, contextual-bandit problems with combinatorial actions like slates \cite{slate} raise additional challenges, but they also provide structure that connects the observations for different actions. It is interesting to explore how this structure can inform augmentation logging for improved bias/variance trade-offs.



\section{Broader Impacts and Ethics}
The main impact of this work is explaining how to decrease the variance for off-policy estimators for contextual bandits by collecting additional augmentation data. This is most likely to be used in building better recommender systems. Our particular method makes it easier to collect data in order evaluate particular policy classes, even when they differ from the logged policies. This will likely make it easier and lower impact for companies to evaluate bigger changes to their recommender systems. Depending on the changes, this could be good or bad. \label{negative}
While this paper uses article clicks as the reward in our experiments, we caution against viewing this as the sole metric of recommender quality, and encourage others to investigate more holistic methods.
\label{consent}The original paper for the Yahoo! Frontpage Dataset did not say that users opted into the experiment. However, we feel that users were unlikely to be harmed by inclusion in the experiment, which randomized recommendations amongst human-curated articles that were available at the time, and revealed no personally identifiable information.

\section*{Acknowledgments}
This research was supported in part by NSF Awards IIS1901168 and IIS-2008139. All content represents the opinion of the authors, which is not necessarily shared or endorsed by their respective employers and/or sponsors.


\bibliography{bibliography}

\begin{thebibliography}{28}
\providecommand{\natexlab}[1]{#1}
\providecommand{\url}[1]{\texttt{#1}}
\expandafter\ifx\csname urlstyle\endcsname\relax
  \providecommand{\doi}[1]{doi: #1}\else
  \providecommand{\doi}{doi: \begingroup \urlstyle{rm}\Url}\fi

\bibitem[Agarwal et~al.(2014)Agarwal, Hsu, Kale, Langford, Li, and
  Schapire]{monster}
Agarwal, A., Hsu, D., Kale, S., Langford, J., Li, L., and Schapire, R.
\newblock Taming the monster: A fast and simple algorithm for contextual
  bandits.
\newblock In Xing, E.~P. and Jebara, T. (eds.), \emph{Proceedings of the 31st
  International Conference on Machine Learning}, volume~32 of \emph{Proceedings
  of Machine Learning Research}, pp.\  1638--1646, Bejing, China, 22--24 Jun
  2014. PMLR.
\newblock URL \url{http://proceedings.mlr.press/v32/agarwalb14.html}.

\bibitem[Agarwal et~al.(2017)Agarwal, Basu, Schnabel, and
  Joachims]{balancedestimator}
Agarwal, A., Basu, S., Schnabel, T., and Joachims, T.
\newblock Effective evaluation using logged bandit feedback from multiple
  loggers.
\newblock In \emph{ACM SIGKDD Conference on Knowledge Discovery and Data Mining
  (KDD)}, 2017.

\bibitem[Agrawal \& Goyal(2013)Agrawal and Goyal]{lints}
Agrawal, S. and Goyal, N.
\newblock Thompson sampling for contextual bandits with linear payoffs.
\newblock In Dasgupta, S. and McAllester, D. (eds.), \emph{Proceedings of the
  30th International Conference on Machine Learning}, volume~28 of
  \emph{Proceedings of Machine Learning Research}, pp.\  127--135, Atlanta,
  Georgia, USA, 17--19 Jun 2013. PMLR.
\newblock URL \url{http://proceedings.mlr.press/v28/agrawal13.html}.

\bibitem[Beygelzimer \& Langford(2009)Beygelzimer and
  Langford]{beygelzimer2009offset}
Beygelzimer, A. and Langford, J.
\newblock The offset tree for learning with partial labels.
\newblock In \emph{KDD}, pp.\  129--138. ACM, 2009.

\bibitem[Bietti et~al.(2018)Bietti, Agarwal, and Langford]{bakeoff}
Bietti, A., Agarwal, A., and Langford, J.
\newblock A contextual bandit bake-off.
\newblock \emph{arXiv preprint arXiv:1802.04064}, 2018.

\bibitem[Bottou et~al.(2013)Bottou, Peters, Qui{{\~n}}onero-Candela, Charles,
  Chickering, Portugaly, Ray, Simard, and Snelson]{bottoucounterfactual}
Bottou, L., Peters, J., Qui{{\~n}}onero-Candela, J., Charles, D.~X.,
  Chickering, D.~M., Portugaly, E., Ray, D., Simard, P., and Snelson, E.
\newblock Counterfactual reasoning and learning systems: The example of
  computational advertising.
\newblock \emph{Journal of Machine Learning Research}, 14\penalty0
  (65):\penalty0 3207--3260, 2013.
\newblock URL \url{http://jmlr.org/papers/v14/bottou13a.html}.

\bibitem[Cesa-Bianchi \& Lugosi(2006)Cesa-Bianchi and Lugosi]{cesa-bianchi06}
Cesa-Bianchi, N. and Lugosi, G.
\newblock \emph{Prediction, Learning, and Games}.
\newblock Cambridge University Press, 2006.

\bibitem[Chu et~al.(2009)Chu, Park, Beaupre, Motgi, Phadke, Chakraborty, and
  Zachariah]{yahoodata}
Chu, W., Park, S.-T., Beaupre, T., Motgi, N., Phadke, A., Chakraborty, S., and
  Zachariah, J.
\newblock A case study of behavior-driven conjoint analysis on {Y}ahoo! {F}ront
  {P}age {T}oday module.
\newblock In \emph{Proceedings of the 15th ACM SIGKDD international conference
  on Knowledge discovery and data mining}, pp.\  1097--1104, 2009.

\bibitem[Dud{\'\i}k et~al.(2011)Dud{\'\i}k, Langford, and Li]{doubly}
Dud{\'\i}k, M., Langford, J., and Li, L.
\newblock Doubly robust policy evaluation and learning.
\newblock In \emph{Proceedings of the 28th International Conference on
  International Conference on Machine Learning}, pp.\  1097--1104, 2011.

\bibitem[Farajtabar et~al.(2018)Farajtabar, Chow, and Ghavamzadeh]{morerobust}
Farajtabar, M., Chow, Y., and Ghavamzadeh, M.
\newblock More robust doubly robust off-policy evaluation.
\newblock In \emph{ICML}, 2018.

\bibitem[Horvitz \& Thompson(1952)Horvitz and Thompson]{ips}
Horvitz, D.~G. and Thompson, D.~J.
\newblock A generalization of sampling without replacement from a finite
  universe.
\newblock \emph{Journal of the American statistical Association}, 47\penalty0
  (260):\penalty0 663--685, 1952.

\bibitem[Joachims et~al.(2018)Joachims, Swaminathan, and de~Rijke]{banditnet}
Joachims, T., Swaminathan, A., and de~Rijke, M.
\newblock Deep learning with logged bandit feedback.
\newblock In \emph{ICLR}, 2018.

\bibitem[Kingma \& Ba(2014)Kingma and Ba]{adam}
Kingma, D.~P. and Ba, J.
\newblock Adam: A method for stochastic optimization.
\newblock \emph{arXiv preprint arXiv:1412.6980}, 2014.

\bibitem[Langford \& Zhang(2007)Langford and Zhang]{contextualbandits}
Langford, J. and Zhang, T.
\newblock The epoch-greedy algorithm for multi-armed bandits with side
  information.
\newblock \emph{Advances in neural information processing systems},
  20:\penalty0 817--824, 2007.

\bibitem[Li et~al.(2010)Li, Chu, Langford, and Schapire]{news}
Li, L., Chu, W., Langford, J., and Schapire, R.~E.
\newblock A contextual-bandit approach to personalized news article
  recommendation.
\newblock In \emph{Proceedings of the 19th international conference on World
  wide web}, pp.\  661--670, 2010.

\bibitem[Li et~al.(2015)Li, Chen, Kleban, and Gupta]{search}
Li, L., Chen, S., Kleban, J., and Gupta, A.
\newblock Counterfactual estimation and optimization of click metrics in search
  engines: A case study.
\newblock In \emph{Proceedings of the 24th International Conference on World
  Wide Web}, pp.\  929--934, 2015.

\bibitem[London \& Sandler(2019)London and Sandler]{london2019bayesian}
London, B. and Sandler, T.
\newblock Bayesian counterfactual risk minimization.
\newblock In \emph{ICML}, pp.\  4125--4133, 2019.

\bibitem[Nair \& Hinton(2010)Nair and Hinton]{relu}
Nair, V. and Hinton, G.~E.
\newblock Rectified linear units improve restricted boltzmann machines.
\newblock In \emph{International Conference on Machine Learning}, 2010.

\bibitem[Owen(2013)]{owen}
Owen, A.
\newblock \emph{Monte Carlo Theory, Methods and Examples}.
\newblock Stanford, 2013.

\bibitem[Sachdeva et~al.(2020)Sachdeva, Su, and Joachims]{defsupport}
Sachdeva, N., Su, Y., and Joachims, T.
\newblock Off-policy bandits with deficient support.
\newblock In \emph{Proceedings of the ACM SIGKDD International Conference on
  Knowledge Discovery and Data Mining (KDD)}, 2020.

\bibitem[Strehl et~al.(2011)Strehl, Langford, Li, and
  Kakade]{strehl2010learning}
Strehl, A., Langford, J., Li, L., and Kakade, S.~M.
\newblock Learning from logged implicit exploration data.
\newblock In \emph{NeurIPS}, 2011.

\bibitem[Su et~al.(2019)Su, Wang, Santacatterina, and Joachims]{cab}
Su, Y., Wang, L., Santacatterina, M., and Joachims, T.
\newblock {CAB}: Continuous adaptive blending for policy evaluation and
  learning.
\newblock In Chaudhuri, K. and Salakhutdinov, R. (eds.), \emph{Proceedings of
  the 36th International Conference on Machine Learning}, volume~97 of
  \emph{Proceedings of Machine Learning Research}, pp.\  6005--6014. PMLR,
  09--15 Jun 2019.
\newblock URL \url{http://proceedings.mlr.press/v97/su19a.html}.

\bibitem[Swaminathan \& Joachims(2015{\natexlab{a}})Swaminathan and
  Joachims]{poem}
Swaminathan, A. and Joachims, T.
\newblock Batch learning from logged bandit feedback through counterfactual
  risk minimization.
\newblock \emph{Journal of Machine Learning Research}, 16\penalty0
  (52):\penalty0 1731--1755, 2015{\natexlab{a}}.
\newblock URL \url{http://jmlr.org/papers/v16/swaminathan15a.html}.

\bibitem[Swaminathan \& Joachims(2015{\natexlab{b}})Swaminathan and
  Joachims]{selfnorm}
Swaminathan, A. and Joachims, T.
\newblock The self-normalized estimator for counterfactual learning.
\newblock In \emph{NeurIPS}, 2015{\natexlab{b}}.

\bibitem[Swaminathan et~al.(2017)Swaminathan, Krishnamurthy, Agarwal, Dudik,
  Langford, Jose, and Zitouni]{slate}
Swaminathan, A., Krishnamurthy, A., Agarwal, A., Dudik, M., Langford, J., Jose,
  D., and Zitouni, I.
\newblock Off-policy evaluation for slate recommendation.
\newblock In \emph{Advances in Neural Information Processing Systems
  (NeurIPS)}, 2017.

\bibitem[Thomas \& Brunskill(2016)Thomas and Brunskill]{magic}
Thomas, P. and Brunskill, E.
\newblock Data-efficient off-policy policy evaluation for reinforcement
  learning.
\newblock In \emph{International Conference on Machine Learning}, pp.\
  2139--2148. PMLR, 2016.

\bibitem[Vanchinathan et~al.(2014)Vanchinathan, Nikolic, De~Bona, and
  Krause]{krause_yahoo}
Vanchinathan, H.~P., Nikolic, I., De~Bona, F., and Krause, A.
\newblock Explore-exploit in top-n recommender systems via gaussian processes.
\newblock In \emph{Proceedings of the 8th ACM Conference on Recommender
  Systems}, RecSys '14, pp.\  225–232, New York, NY, USA, 2014. Association
  for Computing Machinery.
\newblock ISBN 9781450326681.
\newblock \doi{10.1145/2645710.2645733}.
\newblock URL \url{https://doi.org/10.1145/2645710.2645733}.

\bibitem[Wang et~al.(2017)Wang, Agarwal, and Dud\'{\i}k]{switch}
Wang, Y.-X., Agarwal, A., and Dud\'{\i}k, M.
\newblock Optimal and adaptive off-policy evaluation in contextual bandits.
\newblock In \emph{Proceedings of the 34th International Conference on Machine
  Learning}, volume~70 of \emph{Proceedings of Machine Learning Research}, pp.\
   3589--3597, International Convention Centre, Sydney, Australia, 06--11 Aug
  2017. PMLR.
\newblock URL \url{http://proceedings.mlr.press/v70/wang17a.html}.

\end{thebibliography}
\bibliographystyle{icml2022}

\newpage
\appendix
\onecolumn
\section{Proof Appendix}

\subsection{Proof of Equation \ref{equation:balancedvar}, Variance of the Balanced Estimator}
We want to prove Equation \ref{equation:balancedvar}, which gives the variance of the Balanced Estimator
$$
\text{Var}\left[\hat R^{\text{BAL}}_{\targetpolicy}\right]
= \dfrac{1}{N}\paren{\mathbb{E}_{x}\left[\sum_{a \in \calA}
\dfrac{\targetpolicy^2(a|x)\expectedsquarer}{\blendedpolicy(a|x)}
\right] -
R_{\targetpolicy}^2}.$$

In order to do so, we will first introduce a lemma gives the variance of a single IPS term, then describe the variance of the IPS estimator, then move on to the balanced estimator.

\subsubsection{Variance of a single IPS term}
To simplify our proofs, we introduce a lemma.
\begin{lemma}[Variance of a single IPS term]
\label{lemma:singleterm}
For any policy $\pi$ such that the support of $\pi$ is a superset of the support of $\targetpolicy$,
$$\Var\sqaren{\dfrac{\targetpolicy(a_i|x_i)}{\pi(a_i|x_i)}r_i} = \Ex_{x}\sqaren{\sum_{a\in\calA}\dfrac{\targetpolicy^2(a|x)}{\pi(a|x)}\paren{\bar r^2(x, a) + \sigma^2(x, a)}} - R_{\targetpolicy}^2.$$
\end{lemma}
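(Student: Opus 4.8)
The plan is to apply the elementary identity $\Var\sqaren{W} = \Ex\sqaren{W^2} - \paren{\Ex\sqaren{W}}^2$ to the single importance-weighted term $W = \frac{\targetpolicy(a_i|x_i)}{\pi(a_i|x_i)}\reward_i$, evaluating both moments by iterating expectations over the three independent sources of randomness in one sample: the context $x_i \sim \Pr(\xRV)$, the action $a_i \sim \pi(\actionRV|x_i)$, and the reward $\reward_i$ conditioned on $(x_i, a_i)$.

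First I would compute the first moment via the tower rule. Conditioning on $(x_i, a_i)$ replaces $\reward_i$ by its conditional mean $\bar r(x,a)$, and then taking the expectation over $a_i \sim \pi(\cdot|x_i)$ cancels exactly one factor of $\pi(a|x)$ in the denominator, giving $\Ex\sqaren{W} = \Ex_x\sqaren{\sum_{a \in \calA} \targetpolicy(a|x)\bar r(x,a)} = R_{\targetpolicy}$. This is just the unbiasedness of the IPS weight under the full-support assumption.

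Next I would compute the second moment the same way. The key step here is the standard decomposition $\Ex_r\sqaren{r^2(x,a)} = \bar r^2(x,a) + \sigma^2(x,a)$, i.e.\ the conditional second moment equals the squared mean plus the variance. After conditioning on $(x_i, a_i)$ this substitutes for $\reward_i^2$, and the expectation over $a_i \sim \pi(\cdot|x_i)$ again cancels one power of $\pi(a|x)$, leaving $\Ex\sqaren{W^2} = \Ex_x\sqaren{\sum_{a \in \calA} \frac{\targetpolicy^2(a|x)}{\pi(a|x)}\paren{\bar r^2(x,a) + \sigma^2(x,a)}}$. Subtracting $\paren{\Ex\sqaren{W}}^2 = R_{\targetpolicy}^2$ yields exactly the claimed expression.

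The only delicate point, rather than a genuine obstacle, is the full-support hypothesis, and I would make it explicit when writing out the sums so that every division is legitimate. It guarantees that $\targetpolicy(a|x)/\pi(a|x)$ is well defined wherever it contributes: any action with $\targetpolicy(a|x) > 0$ has $\pi(a|x) > 0$; actions outside the support of $\pi$ are never sampled and so drop out of the $a_i \sim \pi$ expectation; and actions with $\targetpolicy(a|x) = 0$ contribute a zero numerator. Everything else is routine manipulation of iterated expectations.
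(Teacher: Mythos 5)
Your proposal is correct and follows essentially the same route as the paper's proof: both apply $\Var[W]=\Ex[W^2]-(\Ex[W])^2$, obtain the first moment $R_{\targetpolicy}$ from IPS unbiasedness under full support, and compute the second moment by iterated expectation over $x$, $a\sim\pi(\cdot|x)$, and $r$, cancelling one factor of $\pi(a|x)$ and substituting $\Ex_r[r^2(x,a)]=\bar r^2(x,a)+\sigma^2(x,a)$. Your explicit attention to where the full-support hypothesis is used is a minor (and welcome) addition, but not a different argument.
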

\begin{proof}
Since $\Var[X] = \Ex[X^2] - \Ex[X]^2$,
$$
\Var\sqaren{\dfrac{\targetpolicy(a_i|x_i)}{\pi(a_i|x_i)}r_i} = \Ex\sqaren{\paren{\dfrac{\targetpolicy(a_i|x_i)}{\pi(a_i|x_i)}r_i}^2} - \Ex\sqaren{\dfrac{\targetpolicy(a_i|x_i)}{\pi(a_i|x_i)}r_i}^2.
$$
Since the IPS estimator is unbiased if the support of the logging policy $\pi(a|x)$ is a superset of $\targetpolicy(a|x)$ for all $x$ in the support of $\Pr(x)$, if we denote $\Ex_{\targetpolicy}[r_i] = R_{\targetpolicy}$, then
$$
\Var\sqaren{\dfrac{\targetpolicy(a_i|x_i)}{\pi(a_i|x_i)}r_i} = \Ex\sqaren{\paren{\dfrac{\targetpolicy(a_i|x_i)}{\pi(a_i|x_i)}r_i}^2} - R_{\targetpolicy}^2.
$$
Breaking up the expectations, we have the following:
\setcounter{equation}{0}
\begin{align}
\Ex\sqaren{\paren{\dfrac{\targetpolicy(a_i|x_i)}{\pi(a_i|x_i)}r_i}^2} &= \Ex_{x\sim\Pr(x)}\Ex_{a\sim\pi(a|x)}\Ex_{r_i\sim r(x, a)}\sqaren{\paren{\dfrac{\targetpolicy(a|x)}{\pi(a|x)}r_i}^2}\\
&= \Ex_{x\sim\Pr(x)}\Ex_{a\sim\pi(a|x)}\sqaren{\dfrac{\targetpolicy^2(a|x)}{\pi^2(a|x)}\Ex_{r_i\sim r(x, a)}\sqaren{r_i^2}}\\
&= \Ex_{x\sim\Pr(x)}\sqaren{\sum_{a\in\calA}\dfrac{\targetpolicy^2(a|x)}{\pi^{\bcancel{2}}(a|x)}\Ex_{r_i\sim r(x, a)}\sqaren{r_i^2}\bcancel{\pi(a|x)}}\\
&= \Ex_{x\sim\Pr(x)}\sqaren{\sum_{a\in\calA}\dfrac{\targetpolicy^2(a|x)}{\pi(a|x)}\Ex_{r_i\sim r(x, a)}\sqaren{r_i^2}}\\
&= \Ex_{x\sim\Pr(x)}\sqaren{\sum_{a\in\calA}\dfrac{\targetpolicy^2(a|x)}{\pi(a|x)}\paren{\bar r^2(x, a) + \sigma^2(x, a)}}\\
&= \Ex_{x}\sqaren{\sum_{a\in\calA}\dfrac{\targetpolicy^2(a|x)}{\pi(a|x)}\paren{\bar r^2(x, a) + \sigma^2(x, a)}}
\end{align}
Equation 1 comes from iterated expectation, 2 from the fact that $\Ex_{r_i\sim r(x, a))}$ is conditioned on $x$ and $a$ (so they can be factored out), 3 from the definition of $\Ex_{a\sim \pi(a|x)}$, 4 from cancelling out terms, 5 from the fact that $\Var[x] = \Ex[x^2] - \Ex[x]^2$, and 6 is simplifying the notation.

Substituting this back into the earlier variance term, we have 

\begin{align*}
\Var\sqaren{\dfrac{\targetpolicy(a_i|x_i)}{\pi(a_i|x_i)}r_i} &= \Ex_{x}\sqaren{\sum_{a\in\calA}\dfrac{\targetpolicy^2(a|x)}{\pi(a|x)}\paren{\bar r^2(x, a) + \sigma^2(x, a)}} - R_{\targetpolicy}^2\\
&= \Ex_{x}\sqaren{\sum_{a\in\calA}\dfrac{\targetpolicy^2(a|x)}{\pi(a|x)}\paren{\expectedsquarer}} - R_{\targetpolicy}^2,
\end{align*}

 as desired.
\end{proof}

\subsubsection{Variance of the Balanced Estimator}

Now we are ready to show that the variance of the balanced estimator is $$
\text{Var}\left[\hat R^{\text{BAL}}_{\targetpolicy}\right]
= \dfrac{1}{N}\paren{\mathbb{E}_{x}\left[\sum_{a \in \calA}
\dfrac{\targetpolicy^2(a|x)\expectedsquarer}{\blendedpolicy(a|x)}
\right] -
R_{\targetpolicy}^2}.$$

\begin{proof}
\label{proof:balancedvar}
Following the first steps of of the proof for equation \ref{equation:ipsvar}, we have the following variance for the balanced estimator, with the first variance taken over $\oldpolicy$ and the second taken over $\evalpolicy$:
$$\Var\sqaren{\hat R_{\targetpolicy}^{\BAL}} = \dfrac{1}{N^2}\paren{\oldn\Var_{\oldpolicy}\sqaren{ \dfrac{\targetpolicy(a_i|x_i)}{\blendedpolicy(a_i|x_i)}r_i} + \evaln\Var_{\evalpolicy}\sqaren{ \dfrac{\targetpolicy(a_i'|x_i')}{\blendedpolicy(a_i'|x_i')}r_i'}}.$$

Following the algebra for Lemma \ref{lemma:singleterm}, but stopping at equation 3, we have the following expression for a single balanced estimator variance term taken over a policy $\pi$.
$$\Var\sqaren{\dfrac{\targetpolicy(a_i|x_i)}{\blendedpolicy(a_i|x_i)}r_i} 
= \Ex_{x}\sqaren{\sum_{a\in\calA}\dfrac{\targetpolicy^2(a|x)}{\blendedpolicy^{2}(a|x)}\expectedsquarer\pi(a|x)} - R_{\targetpolicy}^2.$$

Combining these two facts, we have the following:
\begin{align*}
\Var\sqaren{\hat R_{\targetpolicy}^{\BAL}} &=
\dfrac{\oldn}{N^2}\paren{\Ex_{x}\sqaren{\sum_{a\in\calA}\dfrac{\targetpolicy^2(a|x)}{\blendedpolicy^{2}(a|x)}\expectedsquarer\oldpolicy(a|x)} - R_{\targetpolicy}^2}\\
&\hspace{11pt}+\dfrac{\evaln}{N^2}\paren{\Ex_{x}\sqaren{\sum_{a\in\calA}\dfrac{\targetpolicy^2(a|x)}{\blendedpolicy^{2}(a|x)}\expectedsquarer\evalpolicy(a|x)} - R_{\targetpolicy}^2}\\
&=\dfrac{1}{N}\paren{\Ex_{x}\sqaren{\sum_{a\in\calA}\dfrac{\targetpolicy^2(a|x)}{\blendedpolicy^{2}(a|x)}\expectedsquarer(1-\alpha)\oldpolicy(a|x)}}\\
&\hspace{11pt}+\dfrac{1}{N}\paren{\Ex_{x}\sqaren{\sum_{a\in\calA}\dfrac{\targetpolicy^2(a|x)}{\blendedpolicy^{2}(a|x)}\expectedsquarer\alpha\evalpolicy(a|x)}}\\
&\hspace{11pt}-\dfrac{1}{N}R_{\targetpolicy}^2\\
&=\dfrac{1}{N}\Ex_{x}\sqaren{\sum_{a\in\calA}\dfrac{\targetpolicy^2(a|x)\expectedsquarer}{\blendedpolicy^{2}(a|x)}\paren{(1-\alpha)\oldpolicy(a|x) + \alpha\evalpolicy(a|x)}}\\
&\hspace{11pt}-\dfrac{1}{N}R_{\targetpolicy}^2\\
&=\dfrac{1}{N}\paren{\Ex_{x}\sqaren{\sum_{a\in\calA}\dfrac{\targetpolicy^2(a|x)}{\blendedpolicy^{\bcancel{2}}(a|x)}\expectedsquarer\bcancel{\blendedpolicy(a|x)}}-R_{\targetpolicy}^2},
\end{align*}
as desired.

In the same way that the $R_{\targetpolicy}^2/N$ term comes from the fact that $N = \oldn + \evaln$, we can leverage the fact that $\alpha=\evaln/(\oldn + \evaln)$ and $\blendedpolicy = \alpha\evalpolicy + (1-\alpha)\oldpolicy$ to combine the variance expressions from each policy.
\end{proof}

\subsection{IPS Estimator is unsuitable for problem setting}
\label{proof:ipsnogo}
\paragraph{IPS estimator} The inverse propensity score estimator is one of the oldest off-policy evaluation techniques. Assuming that the past policy $\oldpolicy$ is known, simply weigh each data point by the inverse probability of the action being taken by the policy \cite{ips}. This is simple to express in the logged bandit feedback setting. If all the old data $\calD = \{(x_i, a_i, r_i)\}_{i=1}^{N}$ is sampled from $\oldpolicy$, then the estimator is as follows.

$$\hat R^{\IPS}_{\targetpolicy} = \dfrac{1}{N}\sum_{i=1}^N \dfrac{\targetpolicy(a_i|x_i)}{\oldpolicy(a_i|x_i)} r_i$$

\paragraph{IPS estimator in data augmentation setting}
It is straightforward to adapt this to the logged bandit feedback with data augmentation setting.  Consider the case where all of the old data $\calD_{\old} = \{(x_i, a_i, r_i)\}_{i=1}^{\oldn}$ is sampled from a single policy $\oldpolicy$, and the new data $\calD_{\eval} = \{(x_i', a_i', r_i')\}_{i=1}^{\evaln}$ is sampled from $\evalpolicy$, then the IPS Estimator is as follows, where $N = \oldn + \evaln$.

\begin{equation}
\label{equation:ips}
\hat R_{\targetpolicy}^{\IPS}
= \dfrac{1}{N}\paren{\sum_{i=1}^{\oldn} \dfrac{\targetpolicy(a_i|x_i)}{\oldpolicy(a_i|x_i)}r_i + \sum_{i=1}^{\evaln} \dfrac{\targetpolicy(a_i'|x_i')}{\evalpolicy(a_i'|x_i')}r_i'}
\end{equation}

This has the following variance as proven in Appendix \ref{proof:ipsvar}.

\begin{align*}
\Var\sqaren{\hat R_{\targetpolicy}^{\IPS}} &= \dfrac{\oldn}{N^2}\Ex_{x}\sqaren{\sum_{a\in\calA}\dfrac{\targetpolicy^2(a|x)\expectedsquarer}{\oldpolicy(a|x)}}+\dfrac{\evaln}{N^2}\Ex_{x}\sqaren{\sum_{a\in\calA}\dfrac{\targetpolicy^2(a|x)\expectedsquarer}{\evalpolicy(a|x)}}-\dfrac{1}{N^2}R_{\targetpolicy}^2
\end{align*}

\paragraph{Minimum variance $\evalpolicy$ for IPS is independent of $\oldpolicy$} In this expression, it is clear that the partial derivatives of the variance with respect to $\evalpolicy(a|x)$ depend only on $\targetpolicy$, the term $\expectedsquarer$, and $\Pr(x)$, but not the historical policy $\oldpolicy$ or any data points logged under the old policy. This means that the minimal variance estimator policy would be the same variance minimizing policy for if $\evalpolicy$ were the only policy observed, and cannot account for anything related to the previous policy $\oldpolicy$.

\subsubsection{Variance of the IPS Estimator}
Now, we show that the variance of the IPS Estimator in the batch learning with bandit feedback and data augmentation setting is as follows. 

\begin{equation}\label{equation:ipsvar}
\Var\sqaren{\hat R_{\targetpolicy}^{\IPS}} = \dfrac{\oldn}{N^2}\Ex_{x}\sqaren{\sum_{a\in\calA}\dfrac{\targetpolicy^2(a|x)\expectedsquarer}{\oldpolicy(a|x)}} +\dfrac{\evaln}{N^2}\Ex_{x}\sqaren{\sum_{a\in\calA}\dfrac{\targetpolicy^2(a|x)\expectedsquarer}{\evalpolicy(a|x)}} -\dfrac{1}{N}R_{\targetpolicy}^2
\end{equation}

\begin{proof}
\label{proof:ipsvar}
Recall the IPS estimator in the data augmentation setting.
\begin{align*}
\hat R_{\targetpolicy}^{\IPS}
&= \dfrac{1}{N}\paren{\sum_{i=1}^{\oldn} \dfrac{\targetpolicy(a_i|x_i)}{\oldpolicy(a_i|x_i)}r_i + \sum_{i=1}^{\evaln} \dfrac{\targetpolicy(a_i'|x_i')}{\evalpolicy(a_i'|x_i')}r_i'}\\
\Var\sqaren{\hat R_{\targetpolicy}^{\IPS}} &= \Var\sqaren{\dfrac{1}{N}\paren{\sum_{i=1}^{\oldn} \dfrac{\targetpolicy(a_i|x_i)}{\oldpolicy(a_i|x_i)}r_i + \sum_{i=1}^{\evaln} \dfrac{\targetpolicy(a_i'|x_i')}{\evalpolicy(a_i'|x_i')}r_i'}}
\end{align*}

Since $\Var\sqaren{aX} = a^2\Var\sqaren{X}$,

$$\Var\sqaren{\hat R_{\targetpolicy}^{\IPS}} = \dfrac{1}{N^2}\Var\sqaren{\sum_{i=1}^{\oldn} \dfrac{\targetpolicy(a_i|x_i)}{\oldpolicy(a_i|x_i)}r_i + \sum_{i=1}^{\evaln} \dfrac{\targetpolicy(a_i'|x_i')}{\evalpolicy(a_i'|x_i')}r_i'}.$$

Since each term of the sum is independent conditioned on the target and logging policies $\targetpolicy$ and $ \oldpolicy$, and because $\Var[X+Y] = \Var[X] + \Var[Y]$ for independent variables,

$$\Var\sqaren{\hat R_{\targetpolicy}^{\IPS}} = \dfrac{1}{N^2}\paren{\oldn\Var\sqaren{ \dfrac{\targetpolicy(a_i|x_i)}{\oldpolicy(a_i|x_i)}r_i} + \evaln\Var\sqaren{ \dfrac{\targetpolicy(a_i'|x_i')}{\evalpolicy(a_i'|x_i')}r_i'}}.$$

Now using Lemma \ref{lemma:singleterm} for each variance term, we have the following:
\setcounter{equation}{0}
\begin{align}
\Var\sqaren{\hat R_{\targetpolicy}^{\IPS}} &= \dfrac{1}{N^2}\paren{\oldn\Var\sqaren{ \dfrac{\targetpolicy(a_i|x_i)}{\oldpolicy(a_i|x_i)}r_i} + \evaln\Var\sqaren{ \dfrac{\targetpolicy(a_i'|x_i')}{\evalpolicy(a_i'|x_i')}r_i'}}\\
&= \dfrac{\oldn}{N^2}\paren{\Ex_{x}\sqaren{\sum_{a\in\calA}\dfrac{\targetpolicy^2(a|x)\expectedsquarer}{\oldpolicy(a|x)}} -R_{\targetpolicy}^2} + \dfrac{\evaln}{N^2}\paren{\Ex_{x}\sqaren{\sum_{a\in\calA}\dfrac{\targetpolicy^2(a|x)\expectedsquarer}{\evalpolicy(a|x)}}-
R_{\targetpolicy}^2}\\
&= \dfrac{\oldn}{N^2}\Ex_{x}\sqaren{\sum_{a\in\calA}\dfrac{\targetpolicy^2(a|x)\expectedsquarer}{\oldpolicy(a|x)}}+\dfrac{\evaln}{N^2}\Ex_{x}\sqaren{\sum_{a\in\calA}\dfrac{\targetpolicy^2(a|x)\expectedsquarer}{\evalpolicy(a|x)}}-\dfrac{\oldn+\evaln}{N^2}R_{\targetpolicy}^2\\
&= \dfrac{\oldn}{N^2}\Ex_{x}\sqaren{\sum_{a\in\calA}\dfrac{\targetpolicy^2(a|x)\expectedsquarer}{\oldpolicy(a|x)}}+\dfrac{\evaln}{N^2}\Ex_{x}\sqaren{\sum_{a\in\calA}\dfrac{\targetpolicy^2(a|x)\expectedsquarer}{\evalpolicy(a|x)}}-\dfrac{1}{N}R_{\targetpolicy}^2.
\end{align}
Line 2 comes from using Lemma \ref{lemma:singleterm}, then line 3 is linearity, then line 4 is since $N=\oldn +\evaln$, and we have the desired equation.
\end{proof}


\subsection{Proof of Equation \ref{equation:ipsminvar}, Minimum Variance policy for IPS}
This is a proof for equation \ref{equation:ipsminvar}, which states that the MVAL policy for the IPS estimator is as follows:

$$\evalpolicy^{\BAL}(a|x) = \evalpolicy^{\IPS}(a|x)  = \dfrac{\targetpolicy (a|x)\sqrt{\rewardmeansqprewardvar}}{\sum\limits_{a\in\calA} \targetpolicy (a|x)\sqrt{\rewardmeansqprewardvar}} :=  \minvarpolicy(a|x).$$

\begin{proof}
\label{proof:minvar}
Recall the optimization problem OP\ref{op:mval} for the optimal augmentation policy for $\pi(\cdot|x)$.
\begin{align*}
\argmin_{\pi_{x} \in \calR^{|\calA|}} \quad & \sum_{a \in \calA}\dfrac{\maxpolicy^2(a|x)\parenrewardmeansqprewardvar}{(1-\alpha)\oldpolicy(a|x) + \alpha \pi_x(a)}\\
\textrm{subject to} \quad & \sum_{a\in \calA}\pi_x(a) = 1, \\
&\pi_x(a) \geq 0   \text{ for all } a \in \calA
\end{align*}

If $\alpha = 1$, the variance term becomes 
$$\sum_{a \in \calA}\dfrac{\maxpolicy^2(a|x)\parenrewardmeansqprewardvar}{\pi_x(a)}.$$

Incorporating the constraints using Lagrange multipliers, we have the following objective:

$$L(\pi_x, \lambda) = \sum_{a \in \calA}\dfrac{\maxpolicy^2(a|x)\parenrewardmeansqprewardvar}{\pi_x(a)} - \lambda\paren{1 - \sum_{a\in \calA}\pi_x(a)}$$

The partial derivatives of this objective with respect to $\pi_x(a)$ are
\begin{equation}
    \label{equation:ipspartials}
    \dfrac{\partial}{\partial \pi_x(a)}L(\pi_x, \lambda) = -\alpha
\dfrac{\targetpolicy^2(a|x)\parenrewardmeansqprewardvar}{\pi_x^2(a)} + \lambda 
\end{equation}

Setting this equal to $0$, we have the following for each $\pi_x(a)$.
\begin{align*}
0 &= -\dfrac{\targetpolicy^2(a|x)\parenrewardmeansqprewardvar}{\pi_x^2(a)} + \lambda\\
\lambda &= \dfrac{\targetpolicy^2(a|x)\parenrewardmeansqprewardvar}{\pi_x^2(a)}\\
\pi_x^2(a)  &= \dfrac{\targetpolicy^2(a|x)\parenrewardmeansqprewardvar}{\lambda}  \\
\pi_x(a)  &= \dfrac{\targetpolicy(a|x)\sqrt{\rewardmeansqprewardvar}}{\sqrt{\lambda}}
\end{align*}

The partial derivative with respect to $\lambda$ is 
$$\dfrac{\partial}{\partial \lambda}L(\pi_x, \lambda) = (1 - \sum_{a\in \calA}\pi_x(a)).
$$

Setting this equal to $0$, we have

\begin{align*}
0 &= 1 - \sum_{a\in \calA}\pi_x(a)\\
1 &= \sum_{a\in \calA}\pi_x(a)\\
1 &= \sum_{a\in \calA}\dfrac{\targetpolicy(a|x)\sqrt{\rewardmeansqprewardvar}}{\sqrt{\lambda}} \\
\sqrt{\lambda} &= \sum_{a\in \calA}\targetpolicy(a|x)\sqrt{\rewardmeansqprewardvar}
\end{align*}

Therefore, the optimal solution for $\pi_x(a)$ is

$$\dfrac{\targetpolicy (a|x)\sqrt{\rewardmeansqprewardvar}}{\sum\limits_{a\in\calA} \targetpolicy (a|x)\sqrt{\rewardmeansqprewardvar}},$$

as desired.
\end{proof}

\subsection{Proof of Proposition \ref{thm:bigalpha}, Large-$\alpha$ Closed-Form Solution for MVAL}

Recall Proposition \ref{thm:bigalpha}, which states that if $\exists \policy: \forall \x \in \calX, \action \in \calA: (1-\alpha)\oldpolicy(\action|x) + \alpha \policy(a|x) = \minvarpolicy(\action|\x)$, then the MVAL policy for the balanced estimator is $\policy$.

\begin{proof}
\label{proof:bigalpha}
Consider the variance of the IPS estimator with augmentation policy $\pi$.
$$\text{Var}\left[\hat R^{\text{BAL}}_{\targetpolicy}\right]
= \mathbb{E}_{x}\left[\sum_{a \in \calA}
\dfrac{\targetpolicy^2(a|x)\expectedsquarer}{\pi(a|x)}
\right] -
R_{\targetpolicy}^2.$$

By Equation~\eqref{equation:ipsminvar}, the augmentation policy which minimizes this variance for a given $x$ is

$$
\pi^*(a|x) = \minvarpolicy(a|x) = \dfrac{\targetpolicy (a|x)\sqrt{ r^2(x, a) + \sigma^2(x, a)}}{\sum\limits_{a\in\calA} \targetpolicy (a|x)\sqrt{\bar r^2(x, a) + \sigma^2(x, a)}}.
$$

The variance for $\minvarpolicy$ is a lower bound on the variance for any valid policy $\pi$.

$$
\sum_{a \in \calA}
\dfrac{\targetpolicy^2(a|x)\expectedsquarer}{\minvarpolicy(a|x)}
 -
R_{\targetpolicy}^2 \leq 
\sum_{a \in \calA}
\dfrac{\targetpolicy^2(a|x)\expectedsquarer}{\pi(a|x)}
 -
R_{\targetpolicy}^2.
$$

Since this is a lower bound for any policy $\pi$, t is a lower bound on the variance for any blended policy $(1-\alpha)\oldpolicy(a|x) + \alpha\pi_x(a)$.

$$
\sum_{a \in \calA}
\dfrac{\targetpolicy^2(a|x)\expectedsquarer}{\minvarpolicy(a|x)}
 -
R_{\targetpolicy}^2 \leq 
\sum_{a \in \calA}
\dfrac{\targetpolicy^2(a|x)\expectedsquarer}{(1-\alpha)\oldpolicy(a|x) + \alpha\pi_x(a)}
 -
R_{\targetpolicy}^2\\
$$

Therefore, if $\exists \pi_x: \forall a \in \calA: (1-\alpha)\oldpolicy(a|x) + \alpha\pi_x = \minvarpolicy(a|x)$, then there is no alternative $\pi_x' \neq \pi_x$  that obtains a lower variance when used as $\evalpolicy$ than $\pi_x$ .
\end{proof}

\subsection{Proof of convexity}
\begin{proof}[Proof of convexity]
\label{proof:convex}
Recall Equation \ref{equation:balancedvar}, which states that the variance of the balanced estimator is 
$$
\text{Var}\left[\hat R^{\text{BAL}}_{\targetpolicy}\right]
= \dfrac{1}{N}\paren{\mathbb{E}_{x}\left[\sum_{a \in \calA}
\dfrac{\targetpolicy^2(a|x)\expectedsquarer}{\blendedpolicy(a|x)}
\right] -
R_{\targetpolicy}^2}.$$

The partial derivative of the variance of the balanced estimator with respect to $\evalpolicy(a|x)$ is $$\dfrac{\partial}{\partial \evalpolicy(a|x)}\text{Var}\left[\hat R^{\text{BAL}}_{\targetpolicy}\right] = -\dfrac{\alpha}{N}
\dfrac{\targetpolicy^2(a|x)\expectedsquarer}{\paren{(1-\alpha)\oldpolicy(a|x) + \alpha\evalpolicy(a|x)}^2}
\Pr(x).$$
This partial derivative is always negative, because there is a negative sign, and $\pi(a|x) > 0, \pi^2(a|x) > 0$ for all valid $\pi$, and $\rewardmeansq$ and $\rewardvar$ are also always positive. 

Now consider its second derivatives.

For $x', a' \neq x, a$,

$$\dfrac{\partial^2}{\partial \evalpolicy(a|x)\partial \evalpolicy(a'|x')}\text{Var}\left[\hat R^{\text{BAL}}_{\targetpolicy}\right] = 0.$$

Because these are all zero, the Hessian matrix of the variance of the balanced estimator is diagonal.

For $x, a$,

$$\dfrac{\partial^2}{\partial^2 \evalpolicy(a|x)}\text{Var}\left[\hat R^{\text{BAL}}_{\targetpolicy}\right] = \dfrac{2\alpha^2}{N}
\dfrac{\targetpolicy^2(a|x)\expectedsquarer}{\paren{(1-\alpha)\oldpolicy(a|x) + \alpha\evalpolicy(a|x)}^3}
\Pr(x).$$

Note that every term of this equation is always positive, so this term is always positive.

This means that every term in the diagonal of the Hessian is positive, and since the Hessian is a diagonal matrix, this means that the Hessian is positive-definite, and therefore the optimization problem is convex. \end{proof}

\subsection{Proof of Proposition \ref{thm:deterministic}, Target Policy Variance Optimality for Deterministic Target Policies}
\begin{proof}
\label{proof:deterministic}
Recall the variance expression for the balanced estimator from equation \ref{equation:balancedvar}:
$$\text{Var}\left[\hat R^{\text{BAL}}_{\targetpolicy}\right]
= \dfrac{1}{N}\paren{\mathbb{E}_{x}\left[\sum_{a \in \calA}
\dfrac{\targetpolicy^2(a|x)\expectedsquarer}{(1-\alpha)\oldpolicy(a|x) + \alpha\evalpolicy(a|x)}
\right] -
R_{\targetpolicy}^2}.$$

If $\targetpolicy$ is deterministic, then for all $x\in\calX$, there is only one $a\in\calA$ such that $\targetpolicy(a|x) > 0$. Call that action $a_x$, and note $\targetpolicy(a_x|x) = 1$. This simplifies the variance expression to:
$$\text{Var}\left[\hat R^{\text{BAL}}_{\targetpolicy}\right]
= \dfrac{1}{N}\paren{\mathbb{E}_{x}\left[
\dfrac{1^2\Ex\sqaren{r^2(x, a_x)}}{(1-\alpha)\oldpolicy(a_x|x) + \alpha\evalpolicy(a_x|x)}
\right] -
R_{\targetpolicy}^2}.$$

The partial derivative of this with respect to $\evalpolicy(a_x|x)$ is
$$\dfrac{\partial}{\partial \evalpolicy(a_x|x)}\text{Var}\left[\hat R^{\text{BAL}}_{\targetpolicy}\right]
= -\alpha\dfrac{\Ex\sqaren{r^2(x, a_x)}}{\paren{(1-\alpha)\oldpolicy(a_x|x) + \alpha\evalpolicy(a_x|x)}^2}\Pr(x).$$

Since $\dfrac{\Ex\sqaren{r^2(x, a_x)}}{\paren{(1-\alpha)\oldpolicy(a_x|x) + \alpha\evalpolicy(a_x|x)}^2}\Pr(x)$ is always positive, the partial derivative is always negative, and so all increases of $\evalpolicy(a_x|x)$ decrease the variance. This means that the variance-minimizing strategy is to set $\evalpolicy(a_x|x)$ to the maximum value of $1$. Therefore, for every $x\in\calX$ $\evalpolicy(a_x|x)=1=\targetpolicy(a_x|x)$, so the two policies are equal, as desired.
\end{proof}

\subsection{Proof of Proposition \ref{thm:varunbounded}, No Upper Bound on Variance Reductions}

Recall Proposition \ref{thm:varunbounded}, which states that there is no upper bound on the variance decrease attained by adding a single augmentation sample.

\begin{proof}
\label{proof:varunbounded}
A basic argument is simply to notice that an action, context pair's contribution to the variance for a given logging policy is roughly $\frac{1}{\oldpolicy(a|x)}\paren{\targetpolicy^2(a|x)\expectedsquarer}$. Since the last term is always positive, we can roughly note that changing from a logging policy where $\oldpolicy(a|x) = \epsilon$ to a balanced policy where $\blendedpolicy(a|x) = \frac{N\epsilon + 1}{N}$ results in roughly a $\paren{\frac{1}{\epsilon} - \frac{N}{N\epsilon + 1}}\paren{\targetpolicy^2(a|x)\expectedsquarer} = \frac{1}{\epsilon\paren{N \epsilon + 1}}\paren{\targetpolicy^2(a|x)\expectedsquarer}$ variance improvement. As $\epsilon \to 0$, this expression goes to $\infty$.

In fact, we will show that this holds for a somewhat more complex comparison, between the variance of getting one additional data point from the old logging policy versus getting one additional data point from a well-designed augmentation policy for the balanced estimator.

Recall the variance of  the IPS estimator:
$$
\text{Var}\left[\hat R^{\text{IPS}}_{\targetpolicy}\right]
= \dfrac{1}{N}\paren{\mathbb{E}_{x}\left[\sum_{a \in \calA}
\dfrac{\targetpolicy^2(a|x)\expectedsquarer}{\oldpolicy(a|x)}
\right] -
R_{\targetpolicy}^2}.
$$

Adding $1$ additional sample would decrease the variance to 
$$
\text{Var}\left[\hat R^{\text{IPS}}_{\targetpolicy}\right]
= \dfrac{1}{N+1}\paren{\mathbb{E}_{x}\left[\sum_{a \in \calA}
\dfrac{\targetpolicy^2(a|x)\expectedsquarer}{\oldpolicy(a|x)}
\right] -
R_{\targetpolicy}^2}.
$$

This is the same variance decrease that would be achieved by using the old policy $\oldpolicy$ as the augmentation policy. To keep the two variances straight between using the $\oldpolicy$ as the augmentation policy and using $\evalpolicy$ as the augmentation policy, we refer to the first variance as $\text{Var}\left[\hat R^{\text{IPS}}_{\targetpolicy}\right]$ and the second as $\text{Var}\left[\hat R^{\text{BAL}}_{\targetpolicy}\right]$.

If we define $V = \mathbb{E}_{x}\left[\sum_{a \in \calA}
\dfrac{\targetpolicy^2(a|x)\expectedsquarer}{\oldpolicy(a|x)}
\right] -
R_{\targetpolicy}^2$, then the decrease is as follows:

$$\dfrac{1}{N}V - \dfrac{1}{N+1}V = \dfrac{N+1 - N}{N(N+1)}V = \dfrac{1}{N(N+1)}V.$$

This means that as $N$ gets larger and larger, adding an additional datapoint will decrease the variance by less and less.

In contrast, recall the variance of the balanced estimator:
$$
\text{Var}\left[\hat R^{\text{BAL}}_{\targetpolicy}\right]
= \dfrac{1}{N}\paren{\mathbb{E}_{x}\left[\sum_{a \in \calA}
\dfrac{\targetpolicy^2(a|x)\expectedsquarer}{\blendedpolicy(a|x)}
\right] -
R_{\targetpolicy}^2}.
$$

To simplify our notation, define the following two terms:
\begin{align*}
    Y_{x, a} &= \targetpolicy^2(a|x)\expectedsquarer\\
    V_{x, a} &= \dfrac{\targetpolicy^2(a|x)\expectedsquarer}{\oldpolicy(a|x)}
\end{align*}

The improvement for getting an additional sample from our method as opposed to continuing with the original logging policy is as follows:

\begin{align*}
\text{Var}\left[\hat R^{\text{IPS}}_{\targetpolicy}\right] - \text{Var}\left[\hat R^{\text{BAL}}_{\targetpolicy}\right] &=\dfrac{1}{N+1}\paren{\mathbb{E}_{x}\left[\sum_{a \in \calA}
\dfrac{\targetpolicy^2(a|x)\expectedsquarer}{\oldpolicy(a|x)}
\right] -
\bcancel{R_{\targetpolicy}^2}}\\
&-\dfrac{1}{N+1}\paren{\mathbb{E}_{x}\left[\sum_{a \in \calA}
\dfrac{\targetpolicy^2(a|x)\expectedsquarer}{\blendedpolicy(a|x)}
\right] -
\bcancel{R_{\targetpolicy}^2}}\\
&= \dfrac{1}{N+1}\paren{\mathbb{E}_{x}\left[\sum_{a \in \calA}
Y_{x,a}\paren{\dfrac{1}{\oldpolicy(a|x)} - \dfrac{1}{\blendedpolicy(a|x)}}
\right]}
\end{align*}

Recall that $\blendedpolicy(a|x) = (1-\alpha)\oldpolicy(a|x) - \alpha\evalpolicy(a|x)$, and that $\alpha = \frac{\evaln}{\evaln+\oldn}$, or in this case $1/(N+1)$. We can establish a lower bound for the variance improvement by restricting the class that the minimization is performed over to functions $\delta_a: \delta_{a'}(a|x) = 1$ when $a = a'$, and $\delta_{a'}(a|x)=0$ when $a \neq a'$.

\begin{align*}
\dfrac{1}{\oldpolicy(a|x)} - \dfrac{1}{\blendedpolicy(a|x)} &= \dfrac{1}{\oldpolicy(a|x)} - \dfrac{1}{\frac{N}{N+1}\oldpolicy(a|x) + \frac{1}{N+1}\delta_{a'}(a|x)}\\
&= \dfrac{1}{\oldpolicy(a|x)} - \dfrac{N+1}{N\oldpolicy(a|x) + \delta_{a'}(a|x)}\\
&= \dfrac{N\oldpolicy(a|x) + \delta_{a'}(a|x) - (N+1)\oldpolicy(a|x)}{\oldpolicy(a|x)\paren{N\oldpolicy(a|x) + \delta_{a'}(a|x)}}\\
&= \dfrac{\delta_{a'}(a|x) -\oldpolicy(a|x)}{\oldpolicy(a|x)\paren{N\oldpolicy(a|x) + \delta_{a'}(a|x)}}\\
\end{align*}

Substituting that back into the previous equation for the variance decrease, and because $ V_{x,a} = \dfrac{Y_{x, a}}{\oldpolicy(a|x)}$.

\begin{align*}
\text{Var}\left[\hat R^{\text{IPS}}_{\targetpolicy}\right] - \text{Var}\left[\hat R^{\text{BAL}}_{\targetpolicy}\right]
&= \dfrac{1}{N+1}\paren{\mathbb{E}_{x}\left[\sum_{a \in \calA}
Y_{x,a}\paren{\dfrac{\delta_{a'}(a|x) -\oldpolicy(a|x)}{\oldpolicy(a|x)\paren{N\oldpolicy(a|x) + \delta_{a'}(a|x)}}}\right]}\\
&= \dfrac{1}{N+1}\paren{\mathbb{E}_{x}\left[\sum_{a \in \calA}
V_{x,a}\paren{\dfrac{\delta_{a'}(a|x) -\oldpolicy(a|x)}{N\oldpolicy(a|x) + \delta_{a'}(a|x)}}
\right]}
\end{align*}

Note that whenever $a'\neq a$, $\delta_{a'}(a|x) = 0$. This means that

$$\forall a\neq a': V_{x,a}\paren{\dfrac{\delta_{a'}(a|x)-\oldpolicy(a|x)}{N\oldpolicy(a|x) + \delta_{a'}(a|x)}} = V_{x,a}\paren{\dfrac{0-\oldpolicy(a|x)}{N\oldpolicy(a|x) + 0}} = \dfrac{1}{N}V_{x, a}.$$

This allows us to simplify the term inside the expectation.

\begin{align*}
\sum_{a \in \calA}
V_{x,a}\paren{\dfrac{\delta_{a'}(a|x) -\oldpolicy(a|x)}{N\oldpolicy(a|x) + \delta_{a'}(a|x)}}
&= 
\paren{\dfrac{1 -\oldpolicy(a'|x)}{N\oldpolicy(a'|x) + 1}}V_{x,a'}
- \dfrac{1}{N} \sum_{a\neq a' \in \calA}
V_{x,a}
\end{align*}

Substituting this back in, now have a lower bound for the variance decrease:
$$\text{Var}\left[\hat R^{\text{IPS}}_{\targetpolicy}\right] - \text{Var}\left[\hat R^{\text{BAL}}_{\targetpolicy}\right] \geq \dfrac{1}{N+1}\paren{\mathbb{E}_{x}\left[
\paren{\dfrac{1 -\oldpolicy(a'|x)}{N\oldpolicy(a'|x) + 1}}V_{x,a'}
- \dfrac{1}{N} \sum_{a\neq a'}
V_{x,a}
\right]}$$

Choosing $a' = \argmax_{a\in\calA}V_{x, a}$ for each $x$, we can set $V_{x,a'} \geq V_{x, a}$ for all other $a$. This changes our lower bound to the following, since $V_{x, a'} \geq  V_{x, a}$ implies $-V_{x, a'} \leq -V_{x, a}$.

\begin{align*}
\text{Var}\left[\hat R^{\text{IPS}}_{\targetpolicy}\right] - \text{Var}\left[\hat R^{\text{BAL}}_{\targetpolicy}\right] &\geq \dfrac{1}{N+1}\paren{\mathbb{E}_{x}\left[
\paren{\dfrac{1 -\oldpolicy(a'|x)}{N\oldpolicy(a'|x) + 1}}V_{x,a'}
- \dfrac{1}{N}\sum_{a\neq a'}
V_{x,a'}
\right]}\\
&= \dfrac{1}{N+1}\paren{\mathbb{E}_{x}\left[
\paren{\dfrac{1 -\oldpolicy(a'|x)}{N\oldpolicy(a'|x) + 1}}V_{x,a'}
- \dfrac{|\calA| - 1}{N}
V_{x,a'}
\right]}\\
&= \dfrac{1}{N+1}\paren{\mathbb{E}_{x}\left[
\paren{\dfrac{1 -\oldpolicy(a'|x)}{N\oldpolicy(a'|x) + 1} - \dfrac{|\calA| - 1}{N}}V_{x,a'}\right]}\\
&= \dfrac{1}{N+1}\paren{\mathbb{E}_{x}\left[
\dfrac{N -N\oldpolicy(a'|x) - \paren{N\oldpolicy(a'|x) + 1}\paren{|\calA| - 1}}{N\paren{N\oldpolicy(a'|x) + 1}}V_{x,a'}\right]}\\
&= \dfrac{1}{N+1}\paren{\mathbb{E}_{x}\left[
\dfrac{
N - \bcancel{N\oldpolicy(a'|x)} - N\oldpolicy(a'|x)|\calA| + \bcancel{N\oldpolicy(a'|x)} - |\calA|  +1
}{N\paren{N\oldpolicy(a'|x) + 1}}V_{x,a'}\right]}
\end{align*}

Noting that $V_{x, a'} \geq 0$, we can solve for what $\oldpolicy(a'|x)$ makes the whole term positive.

\begin{align*}
\dfrac{1 -\oldpolicy(a'|x)}{N\oldpolicy(a'|x) + 1} &\geq \dfrac{|\calA| - 1}{N} \\
N(1 -\oldpolicy(a'|x)) &\geq (|\calA| - 1)(N\oldpolicy(a'|x) + 1) \\
N &\geq N|\calA|\oldpolicy(a'|x)  +  (|\calA| - 1)\\
\dfrac{N - |\calA| + 1}{N|\calA|}  &\geq \oldpolicy(a'|x)
\end{align*}

Note that the easiest way to maximize $V_{x, a}$ is to choose an $a'$ such that $\oldpolicy(a'|x)$ is smaller. As $\oldpolicy(a'|x)$ shrinks, the above equation demonstrates that the $\paren{\dfrac{1 -\oldpolicy(a'|x)}{N\oldpolicy(a'|x) + 1} - \dfrac{|\calA| - 1}{N}}V_{x,a'}$ term becomes positive, and therefore the variance decrease becomes positive.

Further, note that this bound has a fair amount of slack -- it only considers deterministic policies $\delta_a$ to sample from, and it simplifies the math by assuming that each action gets the maximum possible variance decrease in going from $\frac{1}{N} V_{x, a}$ to $\frac{1}{N+1} V_{x, a}$.

This shows that if $\oldpolicy(a'|x)$ is small enough, then $V_{x, a'}$ is multiplied by a positive factor. However, as $\oldpolicy(a'|x)$ gets smaller, $V_{x, a'} =  \dfrac{\targetpolicy^2(a'|x)\expectedsquarer}{\oldpolicy(a'|x)}$ can become arbitrarily large. This resulting in an arbitrarily large $\text{Var}\left[\hat R^{\text{IPS}}_{\targetpolicy}\right] - \text{Var}\left[\hat R^{\text{BAL}}_{\targetpolicy}\right]$, and therefore an arbitrarily large variance decrease is possible.
\end{proof}

\end{document}